\documentclass[jmlr]{article}
\usepackage{jmlr}
\usepackage[utf8]{inputenc} %
\usepackage[T1]{fontenc}    %
\usepackage[scaled=0.92]{helvet}
\usepackage{hyperref}       %
\usepackage{url}            %
\usepackage{booktabs}       %
\usepackage{amsfonts}       %
\usepackage{nicefrac}       %
\usepackage{microtype}      %
\usepackage{xcolor}         %

\usepackage{amsthm}
\usepackage{amssymb}
\usepackage{amsmath}
\usepackage{xspace}
\usepackage{ifthen}
\usepackage[skip=2pt]{caption} 
\usepackage{balance}
\usepackage{bm}

\usepackage{floatrow}
\newfloatcommand{capbtabbox}{table}[][\FBwidth]
\usepackage{wrapfig}
\usepackage{enumitem}

\newcommand{\MovieSet}{\mathcal{S}}
\newcommand{\Rating}{\textrm{Rating}}

 \usepackage[bb=boondox]{mathalfa}
 \usepackage{subcaption}
\usepackage[export]{adjustbox}
\usepackage{empheq}
\usepackage{enumitem}
\usepackage{arydshln}

\usepackage{mathtools}
\usepackage{appendix}

\makeatletter
\renewcommand*{\p@section}{\S\,}
\renewcommand*{\p@subsection}{\S\,}
\renewcommand*{\p@subsubsection}{\S\,}
\makeatother

\setlength\belowcaptionskip{-3ex}

\makeatletter
\setlength{\@fptop}{0pt}
\makeatother

\DeclareMathOperator{\argsort}{argsort}

\newcommand{\ind}{\mathbb{1}}

\allowdisplaybreaks %

\renewcommand{\cite}{\citep}
\usepackage{xcolor}

\usepackage{amsthm}
\theoremstyle{definition}
\newtheorem{definition}{Definition}[section]
\newtheorem{corollary}{Corollary}[section]

\newtheorem{proposition}{Proposition}[section]
\newtheorem{lemma}{Lemma}[section]

\newtheorem{example}{Example}
\newtheorem{axiom}{Axiom}

\providecommand{\Norm}[2][]{\ensuremath{%
\ifthenelse{\equal{#1}{}}{\|{#2}\|}{\|{#2}\|_{{#1}}}}\xspace}
\providecommand{\SetCard}[1]{\ensuremath{| #1 |}\xspace}
\providecommand{\SET}[1]{\ensuremath{\{ #1 \}}\xspace}

\providecommand{\Set}[2]{\ensuremath{\SET{#1 \mid #2}}\xspace}

\providecommand{\Kth}[1]{\ensuremath{{#1}^{\rm th}}}

\providecommand{\PROB}{\ensuremath{\mathbb{P}}\xspace}
\providecommand{\Prob}[2][]{\ensuremath{%
\ifthenelse{\equal{#1}{}}{\PROB[#2]}{\PROB_{#1}\left[#2\right]}}\xspace}
\providecommand{\ProbC}[3][]{\Prob[#1]{#2\;|\;#3}}
\providecommand{\Expect}[2][]{\ensuremath{%
\ifthenelse{\equal{#1}{}}{\mathbb{E}}{\mathbb{E}_{#1}}%
\left[#2\right]}\xspace}

\providecommand{\Event}[2][]{\ensuremath{\ifthenelse{\equal{#1}{}}{%
\mathcal{#2}}{\mathcal{#2}_{{#1}}}}\xspace}

\newcommand{\LPlabel}{}

\newenvironment{LP}[3][]{%
\renewcommand{\LPlabel}{#1}
\ifthenelse{\equal{\LPlabel}{}}{%
\[ \begin{array}{ll}
\mbox{#2} & \;\;#3 \\
\mbox{subject to} & \begin{array}[t]{ll}%
}{%
\begin{equation} \begin{array}{ll}
\mbox{#2} & \;\;#3 \\
\mbox{subject to} & \begin{array}[t]{ll}%
}}{%
\ifthenelse{\equal{\LPlabel}{}}{%
\end{array} \end{array} \]}{%
\end{array} \end{array} \label{\LPlabel} \end{equation}}%
}

\newcommand{\FPAR}[1][]{\ensuremath{%
\ifthenelse{\equal{#1}{}}{\phi}{%
\ifthenelse{\equal{#1}{'}}{\phi'}{%
\phi_{#1}}}}\xspace}

\newcommand{\Des}[1]{\ensuremath{v_{#1}}\xspace}
\newcommand{\DesV}{\ensuremath{\bm{v}}\xspace}
\newcommand{\JDIST}{\ensuremath{\Gamma}\xspace}
\newcommand{\JDISTP}{\ensuremath{\JDIST'}\xspace}
\newcommand{\JDistA}[1]{\ensuremath{\JDIST(#1)}\xspace}

\newcommand{\IsBest}[3][]{\ensuremath{%
\ifthenelse{\equal{#1}{}}{\mathcal{M}_{#2,#3}}{\mathcal{M}^{(#1)}_{#2,#3}}}\xspace}

\newcommand{\FD}{OPT/TS-Mixing\xspace}

\newcommand{\agentSet}{\ensuremath{\mathcal{X}}\xspace}

\newcommand{\detrank}{\ensuremath{\sigma}\xspace}
\newcommand{\detrankA}[1]{\ensuremath{\detrank(#1)}\xspace}
\newcommand{\detrankOPT}[1][]{\ensuremath{%
\ifthenelse{\equal{#1}{}}{\detrank^{*}}{\detrank^{*}_{#1}}}\xspace}
\newcommand{\detrankClass}{\ensuremath{\Sigma(\agentSet)}\xspace}

\newcommand{\probrank}[1][]{\ensuremath{%
\ifthenelse{\equal{#1}{}}{\pi}{\pi_{#1}}}\xspace}
\newcommand{\probrankA}[2][]{\ensuremath{\probrank[#1](#2)}\xspace}
\newcommand{\probrankTS}[1][]{\ensuremath{%
\ifthenelse{\equal{#1}{}}{\probrank^{\mathrm{TS}}}{\probrank^{\mathrm{TS}}_{#1}}}\xspace}
\newcommand{\probrankOPT}[1][]{\ensuremath{%
\ifthenelse{\equal{#1}{}}{\probrank^{*}}{\probrank^{*}_{#1}}}\xspace}
\newcommand{\probrankFD}[1]{\ensuremath{\probrank^{\mathrm{Mix}, #1}}\xspace}
\newcommand{\probrankLP}[1]{{\ensuremath{\probrank^{\mathrm{LP}, #1}}\xspace}}
\newcommand{\probrankClass}{\ensuremath{{\Pi(\agentSet)}}\xspace}

\newcommand{\InPos}[3][]{\ensuremath{%
\ifthenelse{\equal{#1}{}}{p_{#2,#3}}{p^{(#1)}_{#2,#3}}}\xspace}
\newcommand{\InPosMat}[1][]{\ensuremath{%
\ifthenelse{\equal{#1}{}}{\mathcal{P}}{\mathcal{P}^{(#1)}}}\xspace}

\newcommand{\posweight}[1]{\ensuremath{w_{#1}}\xspace}

\newcommand{\Util}{U\xspace}
\newcommand{\UtilA}[2]{\ensuremath{\Util(#1 \, | \, #2)}\xspace}

\newcommand{\Dataset}{\ensuremath{\mathcal{D}}\xspace}
\newcommand{\multparamsVec}[1][]{\ensuremath{%
\ifthenelse{\equal{#1}{}}{\bm{\theta}}{\bm{\theta}_{#1}}}\xspace}
\newcommand{\multparams}[2][]{\ensuremath{
\ifthenelse{\equal{#2}{}}{\theta_{#1}}{\theta_{#1,#2}}}\xspace}
\newcommand{\ExpRelevance}[1]{\ensuremath{\overline{\Des{}}_{#1}}\xspace}
\newcommand{\scaleparam}{\ensuremath{s}\xspace}
\newcommand{\countratings}[1]{\ensuremath{\bm{N}_{#1}}\xspace}
\newcommand{\countone}[2]{\ensuremath{N_{#1,#2}}\xspace}
\newcommand{\Dir}[1]{\ensuremath{\text{Dir}(#1)}\xspace}
\newcommand{\Bernoulli}[1]{\ensuremath{\text{Bernoulli}(#1)}\xspace}
\newcommand{\dirichletparams}{\ensuremath{\bm{\alpha}}\xspace}
\newcommand{\dpar}[1]{\ensuremath{\alpha_{#1}}\xspace}

\newcommand{\numMCSamples}{\ensuremath{5 \cdot 10^4}\xspace}
\newcommand{\scaleparamValue}{\ensuremath{1}\xspace}
\newcommand{\numMoviesPerGenre}{\ensuremath{40}\xspace}
\newcommand{\percentSampleDataset}{\ensuremath{10\%}\xspace}
\newcommand{\selectedgenre}{``Comedy''\xspace}
\newcommand{\numruns}{20\xspace}

\newcommand{\Similarity}[2]{\ensuremath{
\ifthenelse{\equal{#2}{}}{\mu_{#1}}{\mu_{#1,#2}}}\xspace}
\newcommand{\SimilaritySample}[2]{\ensuremath{
\ifthenelse{\equal{#2}{}}{\hat{\mu}_{#1}}{\hat{\mu}_{#1,#2}}}\xspace}
\newcommand{\sd}[1]{\ensuremath{\delta_{#1}}\xspace}
\newcommand{\controlusers}{\ensuremath{\mathcal{U}_{\probrankOPT}}}
\newcommand{\treatmentusers}{\ensuremath{\mathcal{U}_{\probrankTS}}}

\newcommand{\probrankMix}{\ensuremath{\probrank[\mathrm{Mix}]}\xspace}

\title{Fairness in Ranking under Uncertainty}

\author{%
  \name Ashudeep Singh   \email ashudeep@cs.cornell.edu\\
  \addr Cornell University, Ithaca, NY\\
  \AND
  \name David Kempe \email david.m.kempe@gmail.com\\
  \addr University of Southern California, Los Angeles, CA\\
  \AND
  \name Thorsten Joachims \email tj@cs.cornell.edu \\
  \addr Cornell University, Ithaca, NY \\
}
\begin{document}

\maketitle

\begin{abstract}
  Fairness has emerged as an important consideration in algorithmic decision making. Unfairness occurs when an agent with higher merit obtains a worse outcome than an agent with lower merit. Our central point is that a primary cause of unfairness is uncertainty. A principal or algorithm making decisions never has access to the agents' true merit, and instead uses proxy features that only imperfectly predict merit (e.g., GPA, star ratings, recommendation letters). None of these ever fully capture an agent's merit; yet existing approaches have mostly been defining fairness notions directly based on observed features and outcomes.

Our primary point is that it is more principled to acknowledge and model the uncertainty explicitly. The role of observed features is to give rise to a posterior distribution of the agents' merits. We use this viewpoint to define a notion of approximate fairness in ranking. We call an algorithm \FPAR-fair (for $\FPAR \in [0,1]$) if it has the following property for all agents $x$ and all $k$: if agent $x$ is among the top $k$ agents with respect to \emph{merit} with probability at least $\rho$ (according to the posterior merit distribution), then the algorithm places the agent among the top $k$ agents in its \emph{ranking} with probability at least $\FPAR \rho$.

We show how to compute rankings that optimally trade off approximate fairness against utility to the principal. In addition to the theoretical characterization, we present an empirical analysis of the potential impact of the approach in simulation studies. For real-world validation, we applied the approach in the context of a paper recommendation system that we built and fielded at the KDD 2020 conference.

\end{abstract}

\section{Introduction} \label{sec:introduction}
Fairness is an important consideration in decision-making, in particular when a limited resource must be allocated among multiple agents by a principal (or decision maker).
A widely accepted tenet of fairness is that if an agent B does not have stronger merits for the resource than A, then B should not get more of the resource than A. Depending on the context,
\emph{merit} could be a qualification (e.g., job performance), a need (e.g., disaster relief), or some other measure of eligibility.

The motivation for our work is that uncertainty about merits is a primary reason that a principal's allocations can violate this tenet and thereby lead to unfair outcomes.
Were agents' merits fully observable, it would be both fair and in the principal's best interest to rank agents by their merit.
However, actual merits are practically always unobservable.
Consider the following standard algorithmic decision making environments:
(1) An e-commerce or recommender platform (the principal) displays items (the agents) in response to a user query. An item's merit is the utility the user would derive from it, whereas the platform can only observe numerical ratings, text reviews, the user's past history, and similar features.
(2) A job recommendation site or employer (the principal) wants to recommend/hire one or more applicants (the agents). The merit of an applicant is her (future) performance on the job over a period of time, whereas the site or employer can only observe (past) grades, test scores, recommendation letters, performance in interviews, and similar assessments.

In both of these examples --- and essentially all others in which algorithms are called upon to make allocation decisions between agents --- uncertainty about merit is unavoidable, and arises from multiple sources:
(1) the training data of a machine learning algorithm is a random sample,
(2) the features themselves often come from a random process, and
(3) the merit itself may only be revealed in the future after a random
process (e.g., whether an item is sold or an employee performs well).
Given that decisions will be made in the presence of uncertainty, it is important to define the notion of \emph{fairness} under uncertainty.
Extending the aforementioned tenet that ``if agent B has less merit than A, then B should not be treated better than A,'' we state the following generalization to uncertainty about merit, first for just two agents:

\begin{axiom} \label{ax:two-agent-fairness}
  If A has merit greater than or equal to B with probability at least $\rho$, then a fair policy should treat A at least as well as B with probability at least $\rho$.
\end{axiom}

This being an axiom, we cannot offer a mathematical justification. It captures an inherent sense of fairness in the absence of enough information, and it converges to the conventional tenet as uncertainty is reduced.
In particular, consider the following situation: two items A, B with 10 reviews each have average star ratings of 3.9 and 3.8, respectively; or two job applicants A, B have GPAs of 3.9 and 3.8. While this constitutes some (weak) evidence that A may have more merit than B, this evidence leaves substantial uncertainty. The posterior merit distributions based on the available information should reflect this uncertainty by having non-trivial variance; our axiom then implies that A and B must be treated similarly to achieve fairness.
In particular, it would be highly unfair to \emph{deterministically} rank A ahead of B (or vice versa).
Our main point is that this uncertainty, rather than the specific numerical values of 3.9 and 3.8, is the reason why a mechanism should treat A and B similarly.

\subsection{Our Contributions}
We study fairness in the presence of uncertainty specifically for the generalization where the principal must rank $n$ items.
Our main contribution is the fairness framework, giving definitions of fairness in ranking in the presence of uncertainty.
This framework, including extensions to approximate notions of fairness, is presented and discussed in depth in Section~\ref{sec:problem}.
We believe that uncertainty of merit is one of the most important sources of unfairness, and modeling it explicitly and axiomatically is key to addressing it.

Next, in Section~\ref{sec:policies}, we present algorithms for a principal to achieve (approximately) fair ranking distributions. A simple algorithm the principal may use to achieve approximate fairness is to mix between an optimal (unfair) ranking and (perfectly fair) Thompson sampling. We show that this policy is not optimal for the principal's utility, and we present an efficient LP-based algorithm that achieves an optimal ranking distribution for the principal, subject to an approximate fairness constraint.

We next explore empirically to what extent a focus on fairness towards the agents reduces the principal's utility. We do so with two extensive sets of experiments: one described in Section~\ref{sec:experiments} on existing data, and one described in Section~\ref{sec:real-world} ``in the wild.'' In the first set of experiments, we consider movie recommendations based on the standard MovieLens dataset and investigate to what extent fairness towards movies would result in lower utility for users of the system.
The second experiment was carried out at the 2020 ACM SIGKDD Conference on Knowledge Discovery and Data Mining, where we implemented and fielded a paper recommendation system. Half of the conference attendees using the system received rankings that were modified to ensure greater fairness towards papers, and we report on various metrics that capture the level of engagement of conference participants based on which group they were assigned to.

The upshot of our experiments and theoretical analysis is that in the settings we have studied, high levels of fairness can be achieved at a small loss in utility for the principal and the system's users. 

\section{Related Work} \label{sec:related-work}

As algorithmic techniques, especially machine learning, find widespread applications in decision making, there is notable interest in understanding its societal impacts. While algorithmic decisions can counteract existing biases by preventing human error and implicit bias, data-driven algorithms may also create new avenues for introducing unintended bias \cite{barocas2016big}. There have been numerous attempts to define notions of fairness in the supervised learning setting, especially for binary classification and risk assessment \cite{calders2009building,zliobaite2015relation,dwork2012fairness,hardt2016equality,mehrabi2019survey}. The group fairness perspective imposes constraints like demographic parity \cite{calders2009building, zliobaite2015relation} and equalized odds \cite{hardt2016equality}.
Follow-up work has proposed techniques for implementing fairness through pre-processing methods \cite{calmon2017optimized,lum2016statistical}, in process while learning the model \cite{zemel2013learning, woodworth2017learning, zafar2017fairness} and post-processing methods \cite{hardt2016equality,pleiss2017fairness,kim2019multiaccuracy}, in addition to causal approaches to fairness \cite{kilbertus2017avoiding, kusner2017counterfactual}.

Individual fairness, on the other hand, is concerned with comparing the outcomes of agents directly, not in aggregate.
Specifically, the individual fairness axiom states that two individuals similar with respect to a task should receive similar outcomes \cite{dwork2012fairness}. While the property of individual fairness is highly desirable, it is hard to define precisely; in particular, it is highly dependent on the definition of a suitable similarity notion. Although similar in spirit, our work sidesteps this need to define a similarity metric between agents in the feature space. Rather, we view an agent's features solely as noisy signals about the agent's merit and posit that a comparison of these merits --- and the principal's uncertainty about them --- should determine the relative ranking. Individual fairness definitions have also been adopted in online learning settings such as stochastic multi-armed bandits \cite{patil2020achieving,heidari2018preventing, schumann2019group, celis2017ranking}, where the desired property is that a worse arm is never ``favored'' over a better arm despite the algorithm's uncertainty over the true payoffs \cite{Joseph2016}, or a smooth fairness assumption that a pair of arms be selected with similar probability if they have a similar payoff distribution \cite{liu2017calibrated}. 
While these definitions are derived from the same tenet of fairness as Axiom~\ref{ax:two-agent-fairness} for a pair of agents, we extend it to rankings, where $n$ agents are compared at a time.

Rankings are a primary interface through which machine learning models support human decision making, ranging from recommendation and search in online systems to machine-learned assessments for college admissions and recruiting.
One added difficulty with considering fairness in the context of rankings is that the decision for an agent (where to rank that agent) depends not only on their own merits, but on others' merits as well \cite{dwork2019learning}. The existing work can be roughly categorized into three groups: Composition-based, opportunity-based, and evidence-based notions of fairness. The notions of fairness based on the composition of the ranking operate along the lines of demographic parity \cite{zliobaite2015relation, calders2009building}, proposing definitions and methods that minimize the difference in the (weighted) representation between groups in a prefix of the ranking \cite{yang2016measuring, celis2017ranking, asudehy2017designing, zehlike2017fa, mehrotra2018towards,zehlike2020reducing}. Other works argue against the winner-take-all allocation of economic opportunity (e.g., exposure, clickthrough, etc.) to the ranked agents or groups of agents, and that the allocation should be based on a notion of merit \cite{singh2018fairness, biega2018equity, diaz2020evaluating}. Meanwhile, the metric-based notions equate a ranking with a set of pairwise comparisons, and define fairness notions based on parity of pairwise metrics within and across groups \cite{kallus2019fairness, beutel2019fairness, narasimhan2020pairwise, lahoti2019operationalizing}. Similar to pairwise accuracy definitions, evidence-based notions such as \cite{dwork2019learning} propose semantic notions such as \emph{domination-compatibility} and \emph{evidence-consistency}, based on relative ordering of subsets within the training data. Our fairness axiom combines the opportunity-based and evidence-based notions by stating that the economic opportunity allocated to the agents must be consistent with the existing evidence about their relative ordering. 

Ranking has been widely studied in the field of Information Retrieval (IR), mostly in the context of optimizing user utility. The \emph{Probability Ranking Principle (PRP)} \cite{robertson1977probability}, a guiding principle for ranking in IR, states that user utility is optimal when documents (i.e., the agents) are ranked by expected values of their estimated relevance (merit) to the user. While this certainly holds when the estimates are unbiased and devoid of uncertainty, we argue that it leads to unfair rankings for agents about whose merits the model might be uncertain. While the research on diversified rankings in IR appears related, in comparison to our work, the goal there is to maximize user utility alone by handling uncertainty about the user's information needs \cite{radlinski2009redundancy} and to avoid redundancy in the ranking \cite{Clarke:2008:NDI:1390334.1390446,Carbonell:1998:UMD:290941.291025}. Besides ranking diversity, IR methods have dealt with uncertainty in relevance that comes via users' implicit or explicit feedback \cite{penha2021calibration,soufiani2012random}, as well as stochasticity arising from optimizing over probabilistic rankings instead of discrete combinatorial structures \cite{taylor2008softrank,burges2005learning}. It is only recently that there has been an interest in developing evaluation metrics \cite{diaz2020evaluating} and learning algorithms \cite{singh2019policy,morik2020controlling} that use stochastic ranking models to deal with unfair exposure. 

Additional recent strands of work on fairness in selection problems focus on fairly selecting individuals distributed across different groups in the presence of group-based implicit bias \cite{kleinberg2018selection,celis2020interventions}, noisy sensitive attributes \cite{mehrotra2021mitigating}, or incomparable merits across different groups \cite{Kearns2017}. \citet{Kearns2017} present a way to fairly select $k$ individuals distributed across $d$ populations where each population can be sorted by merit without uncertainty but merit in one population cannot be directly compared to merit in another. Hence, they propose using the true CDF rank as a derived merit criterion that can be compared.
There has also been recent interest in studying the effect of uncertainty regarding sensitive attributes, labels and other features used by the machine learning model on the accuracy-based fairness properties of the model \cite{ghosh2021fair, prost2021measuring}. In contrast, our work takes a more fundamental approach to defining a merit-based notion of fairness arising due to the presence of uncertainty when estimating merits based on fully observed features and outcomes.

\section{Ranking with Uncertain Merits} \label{sec:problem}
We are interested in ranking policies for a principal (the ranking system, such as an e-commerce platform or a job portal in our earlier examples) whose goal is to rank a set \agentSet of $n$ agents (such as products or applicants). The principal observes some evidence for the merit of the agents, and must produce a distribution over rankings trading off fairness to the agents against the principal's utility.
For the agents, a higher rank is always more desirable than a lower rank.

\subsection{Rankings and Ranking Distributions}
We use $\detrankClass$ to denote the set of all $n!$ rankings, and $\probrankClass$ for the set of all distributions over $\detrankClass$.
We express a ranking $\detrank \in \detrankClass$ in terms of the agents assigned to given positions, i.e., $\detrankA{k}$ is the agent in position $k$.
A ranking distribution $\probrank\in\probrankClass$ can be represented by the $n!$ probabilities \probrankA{\detrank} of the rankings $\detrank \in \detrankClass$. However, all the information relevant for our purposes can be represented more compactly using the \emph{Marginal Rank Distribution}: we write 
$\InPos[\probrank]{x}{k} = \sum_{\detrank: \detrankA{k}=x} \probrankA{\detrank}$ for the probability under \probrank that agent $x \in \agentSet$ is in position $k$ in the ranking.
We let $\InPosMat[\probrank] = (\InPos[\probrank]{x}{k})_{x,k}$ denote the $n \times n$ matrix of all marginal rank probabilities.

The matrix \InPosMat[\probrank] is doubly stochastic, i.e., the sum of each row and column is 1. While \probrank uniquely defines \InPosMat[\probrank], the converse mapping may not be unique. However, given a doubly stochastic matrix \InPosMat, the Birkhoff-von Neumann decomposition \cite{birkhoff1946tres} can be used to compute \emph{some} ranking distribution \probrank consistent with \InPosMat, i.e., $\InPosMat[\probrank] = \InPosMat$; any consistent distribution \probrank will suffice for our purposes.

\subsection{Merit, Uncertainty, and Fairness}
The principal must determine a distribution over rankings of the agents. This distribution will be based on some evidence for the agents' merits. This evidence could take the form of star ratings and reviews of products (combined with the site visitor's partially known preferences), or GPA, test scores, and recommendation letters of an applicant. Our main departure from past work on individual fairness (following \cite{dwork2012fairness}) is that we do not view this evidence as having inherent meaning; rather, its sole role is to induce a posterior joint distribution over the agents' merits.

The merit of agent $x$ is $\Des{x} \in \mathbb{R}$, and we write $\DesV = (\Des{x})_{x\in\agentSet}$ for the vector of all agents' merits.
Based on all observed evidence, the principal can infer a distribution
\JDIST over agents' merits using any suitable Bayesian inference procedure.
Since the particular Bayesian model depends on the application, for our purposes, we merely assume that a posterior distribution \JDIST was inferred using best practices and that ideally, this model is open to verification and audit.

We write \JDistA{\DesV} for the probability of merits \DesV under \JDIST.
We emphasize that the distribution will typically not be independent over entries of \DesV\ --- for example, students' merit conditioned on observed grades will be correlated via common grade inflation if they took the same class.
To avoid awkward tie-breaking issues, we assume that $\Des{x} \neq \Des{y}$ for all distinct $x, y \in \agentSet$ and all \DesV in the support of \JDIST. This side-steps having to define the notion of top-$k$ lists with ties, and comes at little cost in expressivity, as any tie-breaking would typically be encoded in slight perturbations to the \Des{x} anyway.

We write \IsBest[\DesV]{x}{k} for the event that under \DesV, agent $x$ is among the top $k$ agents with respect to merit, i.e., that $\SetCard{\Set{x'}{\Des{x'} > \Des{x}}} < k$.
We now come to our key definition of approximate fairness.

\begin{definition}[Approximately Fair Ranking Distribution]
\label{def:approximate-fairness}
  A ranking distribution \probrank is \textit{\FPAR-fair} iff
\begin{align} \label{eqn:approximate-fairness}
  \sum_{k'=1}^k \InPos[\probrank]{x}{k'}
  & \geq \FPAR \cdot \Prob[\DesV \sim \JDIST]{\IsBest[\DesV]{x}{k}} 
\end{align}
  for all agents $x$ and positions $k$.
  That is, the ranking distribution \probrank ranks $x$ at position $k$ or above with at least a \FPAR fraction of the probability that $x$ is actually among the top $k$ agents according to \JDIST.
  Furthermore, \probrank is \emph{fair} iff it is 1-fair.
\end{definition}

The reason for defining \FPAR-approximately fair ranking distributions (rather than just fair distributions) is that fairness typically comes at a cost to the principal (such as lower expected clickthrough or lower expected performance of recommended employees).
For example, if the \Des{x} are probabilities that a user will purchase products on an e-commerce site, then deterministically ranking by decreasing $\Expect[\JDIST]{\Des{x}}$ is the principal's optimal ranking under common assumptions about user behavior; yet, being deterministic, it is highly unfair.
Our definition of approximate fairness allows, e.g., a policymaker to choose a trade-off regarding how much fairness (with resulting expected utility loss) to require from the principal.
Notice that for $\FPAR = 0$, the principal is unconstrained.

We remark that the merit values \Des{x} only matter insofar as comparison is concerned; in other words, they are used ordinally, not cardinally. This is captured by the following proposition.

\begin{proposition} \label{prop:ordinal-merit}
  Let $f: \mathbb{R} \to \mathbb{R}$ be any strictly increasing function.
  Let \JDISTP be the distribution that draws the vector $(f(\Des{x}))_{x \in \agentSet}$ with probability \JDistA{\DesV} for all \DesV; that is, it replaces each entry \Des{x} with $f(\Des{x})$.
  Then, a ranking distribution \probrank is \FPAR-fair with respect to \JDIST if and only if it is \FPAR-fair with respect to \JDISTP.
\end{proposition}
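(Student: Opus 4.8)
The plan is to observe that the left-hand side of the fairness condition~\eqref{eqn:approximate-fairness} depends only on the ranking distribution \probrank\ and not on the merit distribution at all. Consequently, the entire content of the proposition reduces to showing that the right-hand side --- the top-$k$ probability $\Prob[\DesV \sim \JDIST]{\IsBest[\DesV]{x}{k}}$ --- is unchanged when we pass from \JDIST\ to \JDISTP. Since \JDISTP\ is defined so as to assign to each transformed vector $(f(\Des{x}))_{x \in \agentSet}$ exactly the probability \JDistA{\DesV} that \JDIST\ assigns to \DesV, it suffices to prove that the event \IsBest[\DesV]{x}{k} is preserved under the coordinatewise map $\DesV \mapsto (f(\Des{x}))_{x \in \agentSet}$.

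First I would recall that \IsBest[\DesV]{x}{k} is the event $\SetCard{\Set{x'}{\Des{x'} > \Des{x}}} < k$, i.e., that fewer than $k$ agents strictly outrank $x$. Because $f$ is strictly increasing, for any pair of agents $x', x$ we have $\Des{x'} > \Des{x}$ if and only if $f(\Des{x'}) > f(\Des{x})$. Hence the two index sets $\Set{x'}{\Des{x'} > \Des{x}}$ and $\Set{x'}{f(\Des{x'}) > f(\Des{x})}$ are literally the same subset of \agentSet, and in particular have equal cardinality. Therefore \DesV\ satisfies \IsBest{x}{k} exactly when the transformed vector $(f(\Des{x}))_{x}$ does.

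Combining these two observations, the events defining the two top-$k$ probabilities correspond in a mass-preserving way, so that
\[
  \Prob[\DesV \sim \JDIST]{\IsBest[\DesV]{x}{k}} = \Prob[\DesV \sim \JDISTP]{\IsBest[\DesV]{x}{k}}
\]
for every agent $x$ and position $k$. Since the left-hand side of~\eqref{eqn:approximate-fairness} is identical under both distributions, the family of fairness inequalities for \JDIST\ is term-by-term identical to the family for \JDISTP, which yields the claimed equivalence in both directions.

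As for difficulties, I expect essentially none: the argument is a pure order-preservation fact. The only point requiring a moment's care is that the inequality in \IsBest{}{} is strict, so that the equivalence $\Des{x'} > \Des{x} \iff f(\Des{x'}) > f(\Des{x})$ relies only on strict monotonicity of $f$. No measure-theoretic subtlety arises, because the transformation acts pointwise on the support of \JDIST\ and carries the probability mass along unchanged; the standing tie-breaking assumption $\Des{x} \neq \Des{y}$ is moreover preserved, as $f$ is injective.
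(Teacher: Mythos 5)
Your proposal is correct and follows exactly the paper's argument: the paper's proof likewise notes that strict monotonicity of $f$ preserves the events \IsBest[\DesV]{x}{k}, hence $\Prob[\DesV \sim \JDIST]{\IsBest[\DesV]{x}{k}} = \Prob[\DesV \sim \JDISTP]{\IsBest[\DesV]{x}{k}}$ for all $x$ and $k$, and concludes directly from Definition~\ref{def:approximate-fairness}. You merely spell out the order-preservation and mass-transport details that the paper leaves implicit; there is no substantive difference.
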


\begin{proof}
  Because $f$ is strictly increasing,
  $\Prob[\DesV \sim \JDIST]{\IsBest[\DesV]{x}{k}} = \Prob[\DesV \sim \JDISTP]{\IsBest[\DesV]{x}{k}}$ for all $x$ and $k$.
  This immediately implies the claim by examining Definition~\ref{def:approximate-fairness}.
\end{proof}

Proposition~\ref{prop:ordinal-merit} again highlights a key aspect of our fairness definition: we explicitly avoid expressing any notion of fairness when ``one agent has just `a little' more merit than the other,'' instead arguing that fairness is only truly violated when an agent with more merit is treated worse than one with less merit. In other words, fairness is inherently \emph{ordinal} in our treatment. This viewpoint has implications for a principal seeking ``high-risk high-reward'' agents, which we discuss in more depth in Section~\ref{sec:high-risk-high-reward}.

\subsection{The Principal's Utility}
\label{sec:principal-utility}
The principal's utility can be the profit of an e-commerce site or the satisfaction of its customers.
We assume that the principal's utility for a ranking \detrank with agent merits \DesV takes the form
\begin{align} 
  \UtilA{\detrank}{\DesV}
  & = \sum_{k=1}^n \posweight{k} \Des{\detrankA{k}},\label{eq:util}
\end{align}
where $\posweight{k}$ is the position weight for position $k$ in the ranking, and we assume that the $\posweight{k}$ are non-increasing, i.e., the principal derives the most utility from earlier positions of the ranking.

The assumption that the utility from each position is factorable (i.e., of the form $\posweight{k} \cdot \Des{\detrankA{k}}$) is quite standard in the literature \cite{jarvelin2002cumulated,taylor2008softrank}.
The assumption that the utility is \emph{linear} in \Des{\detrankA{k}} is in fact not restrictive at all.
To see this, assume that the principal's utility were of the form $\posweight{k} \cdot f(\Des{\detrankA{k}})$ for some strictly increasing function $f$. By Proposition~\ref{prop:ordinal-merit}, the exact same fairness guarantees are achieved when the agents' merits \Des{x} are replaced with $f(\Des{x})$; doing so preserves fairness, while in fact making the principal's utility linear in the merits.
Some common examples of utility functions falling into this general framework are DCG with  $\posweight{k} = 1/\log_2(1+k)$, Average Reciprocal Rank with $\posweight{k} = 1/k$, and Precision@K with $\posweight{k} = \ind[k\leq K]/K$. 

When the ranking and merits are drawn from distributions, the principal's utility is the expected utility under both sources of randomness:
\begin{align}
  \UtilA{\probrank}{\JDIST}
  & = \Expect[\detrank\sim\probrank, \DesV \sim \JDIST]{\UtilA{\detrank}{\DesV}}. \label{eqn:util-probrank}
\end{align}

\subsection{Discussion} \label{sec:discussion}
Our definition is superficially similar to existing definitions of individual fairness (e.g., \cite{dwork2012fairness,Joseph2016}), in that similar observable features often lead to similar outcomes.
Importantly, though, it side-steps the need to define a similarity metric between agents in the feature space.
Furthermore, it does not treat the observable attributes (such as star ratings) themselves as any notion of ``merit.''
Instead, our central point is that agents' features should be viewed \emph{solely} as noisy signals about the agents' merits and that a comparison of their merits --- and the principal's uncertainty about the merits --- should determine the agents' relative ranking.
That moves the key task of quantifying individual fairness from articulating which features should be considered relevant for similarity, to articulating what inferences can be drawn about merit from observed features.

\subsubsection{Merit as an Abstraction Boundary between Data and Fairness}

One may argue, rightfully, that from an operational perspective, our approach simply pushes the normative decisions into determining \JDIST.
For example, if the distribution \JDIST were biased in favor of or against a particular group, then the decisions of a supposedly fair algorithm (with respect to \JDIST) would in fact be unfair to that group.
However, our main point is that normative decisions \emph{should} indeed be encoded in the distribution \JDIST.
To appreciate the conceptual approach, first notice that any algorithm implicitly encodes normative decisions, merely in the outputs it produces, which will favor some agents over others.
The key question is how these normative decisions are encoded, how they can be articulated, and whether they could possibly be audited.
If they are encoded in ad hoc algorithmic choices, articulating and auditing them may be difficult.

As an example, consider an admissions officer at a university, who believes that the GPA or SAT scores of affluent applicants may be higher due to access to tutors, rather than true academic potential.
One approach to compensate for this advantage could be to subtract some (wealth-dependent) amount from an applicant's scores.
A more principled approach --- and the one we advocate --- is for the admissions officer to explicitly express the possible distribution of merits given the test scores and wealth. The suitable notion of fairness is then derived by our framework, rather than as an ad hoc choice.
A substantive discussion can then be had around the assumptions that go into the admission officer's particular choice of distribution, whether a different distribution would be more suitable, etc.

In a sense, the notion of merit, and distributions thereof, serves as a clean abstraction boundary between available data, and the desired fairness and utility.
We argue that frequently, the difficult question to address is not so much what is ``fair,'' but what the data truly reveal about an agent's merit.
The latter should be articulated by domain experts, whereas the role of computer science is to provide frameworks for deriving fair algorithms \emph{given} the merit distributions, as well as statistical approaches that may guide the derivation of \JDIST from data.

\subsubsection{Randomization, Fairness, and Single-Shot Scenarios}
In the introduction, we discussed two possible applications in which fairness is desirable: ranking of products in online e-commerce, and ranking of job applicants. We note that these two settings differ along an important dimension: e-commerce sites typically display/rank the same set of products many times over a short period of time, and the stakes each time are fairly low. On the other hand, any one particular job is a one-shot setting with high stakes.
  Intuitively, it ``feels'' like the use of randomization as a means to achieve fairness is more natural in the former setting than the latter. We discuss this issue in more depth.

First, we consider two practical reasons for feeling that randomization is more natural in repeated low-stakes settings. For one, in a high-stakes situation, a principal may be less willing to trade off utility for fairness. Moreover, if fairness is \emph{required} of the principal (rather than the principal's own goal), in a single-shot setting, it is much more difficult (or impossible) to verify that a decision was indeed made probabilistically; in contrast, for a repeated setting, statistical tools can be employed to keep the principal honest.

More fundamentally, the two settings differ in the point in time at which fairness is guaranteed. For concreteness, consider the simplest setting: two agents with identical posterior distributions vie for one position. In this case, a coin flip is \emph{ex ante} fair: before the coin flip is realized, both agents have the same probability of being selected. However, it is not fair \emph{ex post}: despite both having equal merit, one was selected, and the other was not. Contrast this with the alternative in which the same two agents compete multiple times, and a coin is flipped each time. Ex ante fairness is of course preserved, but even ex post, each agent was selected approximately the same number of times. This example may explain why randomization ``feels'' more fair for repeated than one-shot setting.

The fact that for a one-shot setting, a coin flip is only ex ante fair, however, does not obviate the need for making fair decisions in one-shot settings. There will be situations in which a principal is faced with multiple essentially indistinguishable agents and not enough positions for all of them.\footnote{For example, in college admissions, qualified applicants typically  outnumber available slots, and differences among the qualified applicants are frequently very small.} While ex ante fairness may not be completely satisfactory, it still guarantees ``more'' fairness than arbitrary deterministic tie-breaking. Indeed, one may argue that the goal of many principals is not so much to make fair decisions as to make defensible ones. For example, if applicants are ranked strictly by GPA, choosing an applicant with GPA 3.91 over one with GPA of 3.90 is essentially random tie-breaking, but with a rule that can be defended.
Our point here is that randomization should be considered as a viable alternative, if the true goal is to achieve fairness.

\subsubsection{Information Acquisition Incentives}
An additional benefit of requiring the use of fair ranking policies is that it makes the principal bear more of the cost of an inaccurate \JDIST, and thereby incentivizes the principal to improve the distribution \JDIST.
To see this at a high level, notice that if \JDIST precisely revealed merits, then the optimal and fair policies would coincide.
In the presence of uncertainty, an unrestricted principal will optimize utility, and in particular do better than a principal who is constrained to be (partially or completely) fair. Thus, a fair principal stands to gain more by obtaining perfect information. The following example shows that this difference can be substantial, i.e., the information acquisition incentives for a fair principal can be much higher.

\begin{example}
  Consider again the case of a job portal. To keep the example simple, consider a scenario in which the portal tries to recommend exactly one candidate for a position.\footnote{This can be considered a ranking problem in which the first slot has $\posweight{1} = 1$, while all other slots have weight 0.}
  There are two groups of candidates, which we call majority %
  and underrepresented minority (URM). %
  The majority group contains exactly one candidate of merit 1, all others having merit 0; the URM group contains exactly one candidate of merit $1+\epsilon$, all others having merit 0 as well.
  Due to past experience with the majority group, the portal's distribution \JDIST over merits precisely pinpoints the meritorious majority candidate, but reveals no information about the meritorious URM candidate; that is, the distribution places equal probability on each of the URM candidates having merit $1+\epsilon$.

  A utility-maximizing portal will therefore go with ``the known thing,'' obtaining utility 1 from recommending the majority candidate. The loss in utility from ignoring the URM candidates is only $\epsilon$. Now consider a portal required to be 1-fair. Because each of the URM candidates is the best candidate with probability $\nicefrac{1}{n}$ (when there are $n$ URM candidates), and the majority candidate is \emph{known} to never be the best candidate, each URM candidate \emph{must} be recommended with probability $\nicefrac{1}{n}$. Here, the uncertainty about which URM candidate is meritorious will provide the portal with a utility that is only $\nicefrac{(1+\epsilon)}{n}$.

\end{example} 

In this example, fairness strengthens the incentive for the portal to acquire more information about the URM group; specifically, to learn to perfectly identify the meritorious candidate.
Under full knowledge, the portal will now have utility $1+\epsilon$ for both the fair and the utility-maximizing policy.
For the utility-maximizing portal, this is the optimal choice; and for the fair strategy, it is perfectly fair to always select the (deterministically known) best candidate.
Thus, a portal forced to use the fair strategy stands to increase its utility by a much larger amount; at least in this example, our definition of fairness splits the cost of a high-variance distribution \JDIST more evenly between the principal and the affected agents when compared to the utility-optimizing policy, where almost all the cost of uncertainty is borne by the agents in the URM group. This drastically increases the principal's incentives for more accurate and equitable information gathering.

To what extent the insights from this example generalize to arbitrary settings (e.g., whether the principal \emph{always} stands to gain more from additional information when forced to be fairer) is a fascinating direction for future research.

\subsubsection{Ordinal Merit and High-Risk High-Reward Agents}
\label{sec:high-risk-high-reward}  
As we discussed earlier, Proposition~\ref{prop:ordinal-merit} highlights the fact that our definition of fairness only considers ordinal properties, i.e., comparisons, of merit.
This means that frequently selecting ``moonshot'' agents (those with very rare very high merit) would be considered unfair. We argue that this is not a drawback of our fairness definition; rather, if moonshot attempts are worth supporting frequently, then the definition of merit should be altered to reflect this understanding. As a result, viewing the merit definition under the prism of our fairness definition helps reveal misalignments between stated merit and actual preferences.

For a concrete example, consider two agents: agent A has known merit 1, while agent B has merit $M \gg 1$ with probability 1\% and 0 with probability 99\%. When $M > 100$, agent B has larger expected merit, but regardless of whether $M > 100$ or $M \leq 100$, a fully fair principal cannot select B with probability more than 1\%.
One may consider this a shortcoming of our model: it would prevent, for instance, a funding agency (which tries to be fair to research grant PIs) from focusing on high-risk high-reward research.
We argue that the shortcoming will typically not be in the fairness definition, but in the chosen definition of merit.
For concreteness, suppose that the status quo is to evaluate merit as the total number of citations which the funded work attracts during the next century.\footnote{This measure is chosen for simplicity of discussion, not to actually endorse this metric.}
Also, for simplicity, suppose that ``high-reward'' research is research that attracts more than 100,000 citations over the next century.
If we consider one unit of merit as 1000 citations, and assume that the typical research grant results in work attracting about that many citations, then the funding agency faces the problem from the previous paragraph, and will not be able to support PI B with probability more than 1\%. This goes against the express preference of many funding agencies for high-risk high-reward work.

However, if one truly believes that high-reward work is fundamentally different (e.g., it will change the world), then this difference should be explicitly modeled in the notion of merit.
For example, rather than ``number of citations,'' an alternative notion of merit would be ``probability that the number of citations exceeds 100,000.'' This approach would allow the agency to select PIs based on the posterior probability (based on observed attributes, such as track record and the proposal) of producing such high-impact work. Of course, in reality, different aspects of merit can be combined to define a more accurate notion of merit that reflects what society values as true merit of research.

The restrictions imposed on a principal by our framework will and should force the principal to articulate actual merit of agents carefully, rather than adding ad hoc objectives. Once merit has been clearly defined, we anticipate that the conflict between fairness and societal objectives will be significantly reduced.

\subsubsection{Other Considerations}

In extending the probabilistic fairness axiom from two to multiple agents in Equation~\eqref{eqn:approximate-fairness}, we chose to axiomatize fairness in terms of which position agents are assigned to. An equally valid generalization would have been to require for each pair $x, y$ of agents that if $x$ has more merit than $y$ with probability at least $\rho$, then $x$ must precede $y$ in the ranking with probability at least $\FPAR \cdot \rho$.
The main reason why we prefer Equation~\eqref{eqn:approximate-fairness} is computational: the only linear programs we know for the alternative approach require variables for all rankings and are thus exponential (in $n$) in size. Exploring the alternative definition is an interesting direction for future work.

\section{Optimal and Fair Policies} \label{sec:policies}
For a distribution \JDIST over merits, let \detrankOPT[\JDIST] be the ranking which sorts the agents by expected merit, i.e., by non-increasing $\Expect[\DesV \sim \JDIST]{\Des{x}}$.
The following well-known proposition follows because the position weights $\posweight{k}$ are non-increasing; we provide a short proof for completeness.

\begin{proposition} \label{prop:sorting-optimal}
  \detrankOPT[\JDIST] is a utility-maximizing ranking policy for the principal.
\end{proposition}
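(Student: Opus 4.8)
The plan is to prove optimality in two reduction steps---first collapsing the merit distribution to expected merits, then collapsing randomized policies to deterministic rankings---after which the claim reduces to a classical rearrangement (exchange) argument. Write $\ExpRelevance{x} := \Expect[\DesV \sim \JDIST]{\Des{x}}$ for the expected merit of agent $x$; this is exactly the quantity by which $\detrankOPT[\JDIST]$ sorts. First I would use linearity of expectation: for any fixed deterministic ranking $\detrank$, the utility~\eqref{eq:util} gives
\[
  \Expect[\DesV \sim \JDIST]{\UtilA{\detrank}{\DesV}}
  = \sum_{k=1}^n \posweight{k}\, \Expect[\DesV \sim \JDIST]{\Des{\detrankA{k}}}
  = \sum_{k=1}^n \posweight{k}\, \ExpRelevance{\detrankA{k}},
\]
so that the expected utility of $\detrank$ depends on $\JDIST$ only through the scalars $\ExpRelevance{x}$.

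Next I would eliminate randomization. By definition~\eqref{eqn:util-probrank}, $\UtilA{\probrank}{\JDIST} = \sum_{\detrank} \probrankA{\detrank}\, \Expect[\DesV \sim \JDIST]{\UtilA{\detrank}{\DesV}}$ is a convex combination of deterministic expected utilities, hence at most $\max_{\detrank} \Expect[\DesV\sim\JDIST]{\UtilA{\detrank}{\DesV}}$; since a deterministic ranking is itself a (degenerate) ranking distribution, it therefore suffices to show that $\detrankOPT[\JDIST]$ maximizes $\sum_{k} \posweight{k}\, \ExpRelevance{\detrankA{k}}$ over all permutations $\detrank$. (Equivalently, one could rewrite the objective as $\sum_k \posweight{k} \sum_x \InPos[\probrank]{x}{k}\,\ExpRelevance{x}$, a linear function of the doubly stochastic matrix $\InPosMat[\probrank]$ whose maximum over the Birkhoff polytope is attained at a permutation matrix, but the convex-combination argument avoids this machinery.) For the permutation case I would run an exchange argument: if some $\detrank$ places a lower-expected-merit agent ahead of a higher one, i.e., there exist positions $i < j$ with $\ExpRelevance{\detrankA{i}} < \ExpRelevance{\detrankA{j}}$, then because the weights are non-increasing we have $\posweight{i} \geq \posweight{j}$, and swapping the agents in positions $i$ and $j$ changes the objective by $(\posweight{i} - \posweight{j})(\ExpRelevance{\detrankA{j}} - \ExpRelevance{\detrankA{i}}) \geq 0$. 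Iterating such swaps sorts $\detrank$ into $\detrankOPT[\JDIST]$ without ever decreasing utility, establishing optimality.

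I do not expect a substantive obstacle: this is essentially the Probability Ranking Principle, and the underlying fact is the rearrangement inequality applied to the sorted sequences $(\posweight{k})_k$ and $(\ExpRelevance{x})_x$. The only points requiring mild care are making the reduction to expected merits explicit so the position-weight structure is exposed, and checking that the exchange step tolerates ties in the $\posweight{k}$---which it does, since the swap bound is a non-strict inequality, so equal weights and any ambiguity in the sorted order cause no difficulty (we need optimality, not uniqueness).
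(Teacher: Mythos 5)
Your proposal is correct and takes essentially the same approach as the paper: both reduce the objective via linearity to the form $\sum_k \posweight{k} \cdot \Expect[\DesV \sim \JDIST]{\Des{\detrankA{k}}}$ and then conclude by an exchange argument whose sign computation is the identical quantity $(\posweight{j}-\posweight{k})\bigl(\Expect[\DesV\sim\JDIST]{\Des{x}}-\Expect[\DesV\sim\JDIST]{\Des{y}}\bigr)\geq 0$. The only cosmetic difference is that you first collapse randomization by convexity and swap agents within a deterministic permutation, whereas the paper performs the $\epsilon$-mass exchange directly on the doubly stochastic marginal matrix $\InPosMat[\probrank]$; both variants are sound.
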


\begin{proof}
  Let \probrank be a randomized policy for the principal. We will use a standard exchange argument to show that making \probrank more similar to \detrankOPT[\JDIST] can only increase the principal's utility.
  Recall that by Equation~\eqref{eqn:util-probrank}, the principal's utility under \probrank can be written as
  \begin{align*}
    \UtilA{\probrank}{\JDIST}
    &= \sum_{x\in\agentSet} \sum_k \InPos[\probrank]{x}{k}  \cdot \Expect[\DesV\sim\JDIST]{\Des{x}} \cdot \posweight{k}.
  \end{align*}
  Assume that \probrank does not sort $x$ by non-increasing $\Expect[\DesV\sim\JDIST]{\Des{x}}$. Then, there exist two positions $j < k$ and two agents $x, y$ such that $\Expect[\DesV\sim\JDIST]{\Des{x}} > \Expect[\DesV\sim\JDIST]{\Des{y}}$, and $\InPos[\probrank]{x}{k} > 0$ and $\InPos[\probrank]{y}{j} > 0$.
  Let $\epsilon = \min (\InPos[\probrank]{x}{k}, \InPos[\probrank]{y}{j}) > 0$, and consider the modified policy which subtracts $\epsilon$ from \InPos[\probrank]{x}{k} and \InPos[\probrank]{y}{j} and adds $\epsilon$ to \InPos[\probrank]{x}{j} and \InPos[\probrank]{y}{k}.
  This changes the expected utility of the policy by
  \begin{align*}
  \epsilon \cdot (
    \Expect[\DesV\sim\JDIST]{\Des{x}} \cdot \posweight{j} + 
    \Expect[\DesV\sim\JDIST]{\Des{y}} \cdot \posweight{k} - 
    \Expect[\DesV\sim\JDIST]{\Des{x}} \cdot \posweight{k} -
    \Expect[\DesV\sim\JDIST]{\Des{y}} \cdot \posweight{j} )
  \\ =  \epsilon \cdot (\posweight{j} - \posweight{k})
         \cdot (\Expect[\DesV\sim\JDIST]{\Des{x}} - 
                \Expect[\DesV\sim\JDIST]{\Des{y}}) 
    \; \geq \; 0.
  \end{align*}
  By repeating this type of update, the policy eventually becomes fully sorted, weakly increasing the utility with every step. Thus, the optimal policy must be sorted by $\Expect[\DesV\sim\JDIST]{\Des{x}}$.
\end{proof}

Assuming that the principal's expected utility can be evaluated efficiently, computing \detrankOPT[\JDIST] only requires sorting the agents by utility, and thus takes time only $O(n \log n)$.
While this policy conforms to the Probability Ranking Principle \cite{robertson1977probability}, it violates Axiom~\ref{ax:two-agent-fairness} for ranking fairness when merits are uncertain.
We define a natural solution for a 1-fair ranking distribution based on Thompson Sampling:

\begin{definition}[Thompson Sampling Ranking Distribution]
  Define \probrankTS[\JDIST] as follows: 
  first, draw a vector of merits $\DesV \sim \JDIST$,
  then rank the agents by decreasing merits in \DesV.
\end{definition}

That \probrankTS[\JDIST] is 1-fair follows directly from the definition of fairness. By definition, it ranks each agent $x$ in position $k$ with exactly the probability that $x$ has $k$-th highest merit.

\begin{proposition} \label{prop:thompson-fairness}
\probrankTS[\JDIST] is a 1-fair ranking distribution.
\end{proposition}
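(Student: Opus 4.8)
The plan is to exploit the fact that $\probrankTS[\JDIST]$ is, by construction, nothing but the pushforward of the merit distribution $\JDIST$ through the ``sort by decreasing merit'' map, so that an agent's \emph{position} coincides with its \emph{merit-rank}. First I would fix a realization $\DesV$ in the support of $\JDIST$. Since the model assumes that all coordinates of $\DesV$ are distinct on the support of $\JDIST$, sorting by decreasing merit is unambiguous and places agent $x$ in the unique position $1 + \SetCard{\Set{x'}{\Des{x'} > \Des{x}}}$. In particular, $x$ is assigned to one of the top $k$ positions if and only if $\SetCard{\Set{x'}{\Des{x'} > \Des{x}}} < k$, which is exactly the event $\IsBest[\DesV]{x}{k}$ from the definition of fairness.

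Next I would pass to probabilities over $\DesV \sim \JDIST$. Because the marginal rank probability $\InPos[{\probrankTS[\JDIST]}]{x}{k'}$ is the probability that $x$ lands in position $k'$, and by the previous step $x$ lands in position $k'$ exactly when $\SetCard{\Set{x'}{\Des{x'} > \Des{x}}} = k' - 1$, summing these disjoint events over the prefix $k' = 1, \dots, k$ yields
\[
  \sum_{k'=1}^{k} \InPos[{\probrankTS[\JDIST]}]{x}{k'}
  = \Prob[\DesV\sim\JDIST]{\SetCard{\Set{x'}{\Des{x'} > \Des{x}}} < k}
  = \Prob[\DesV\sim\JDIST]{\IsBest[\DesV]{x}{k}}.
\]
Thus condition \eqref{eqn:approximate-fairness} holds with equality for $\FPAR = 1$, for every agent $x$ and every position $k$, which is precisely the statement that $\probrankTS[\JDIST]$ is $1$-fair.

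The argument is almost entirely definitional, so I do not expect a substantive obstacle; the only points requiring care are bookkeeping. The first is the well-definedness of the position map: the draw-then-sort procedure yields a genuine permutation (and hence a well-defined marginal rank matrix $\InPosMat[{\probrankTS[\JDIST]}]$) only because the no-ties assumption guarantees distinct merits almost surely --- this is exactly where that assumption is used. The second is the off-by-one alignment between the strict inequality $\SetCard{\Set{x'}{\Des{x'} > \Des{x}}} < k$ defining the event $\IsBest[\DesV]{x}{k}$ and the count $k'-1$ of higher-merit agents appearing in the prefix sum, which I would verify explicitly. Since the identity is in fact an equality, the required inequality $\geq$ is trivially met, so the entire content of the proof is the identification of these two events.
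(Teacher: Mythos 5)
Your proof is correct and takes essentially the same route as the paper, which simply observes that by construction \probrankTS[\JDIST] places each agent $x$ in position $k$ with exactly the probability that $x$ has the $k$-th highest merit under \JDIST, so the fairness inequality \eqref{eqn:approximate-fairness} holds with equality for $\FPAR = 1$. You have merely spelled out the definitional bookkeeping (the no-ties assumption and the prefix-sum alignment) that the paper leaves implicit.
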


Furthermore, computing \probrankTS[\JDIST] only involves sampling from \JDIST and then sorting the agents by merit, so it can be efficiently performed in time $O(n \log n)$.

\subsection{Trading Off Utility and Fairness} \label{sec:fairness-day}

One straightforward way of trading off between the two objectives of fairness and principal's utility is to randomize between the two policies $\probrankTS[\JDIST]$ and $\probrankOPT[\JDIST]$.

\begin{definition}[\FD]
The \FD ranking policy $\probrankFD{\FPAR}$ randomizes between \probrankTS[\JDIST] and \probrankOPT[\JDIST] with probabilities $\FPAR$ and $1-\FPAR$, respectively.
\end{definition}

This policy inherits a runtime of $O(n \log n)$ from its two constituent policies \probrankTS[\JDIST] and \probrankOPT[\JDIST].
The following lemma gives guarantees for such randomization (but we will later see that this strategy is suboptimal).

\begin{lemma} \label{lemma:mixture}
Consider two ranking policies \probrank[1] and \probrank[2] such that \probrank[1] is \FPAR[1]-fair and \probrank[2] is \FPAR[2]-fair. A policy that randomizes between \probrank[1] and \probrank[2] with probabilities $q$ and $1-q$, respectively, is at least $(q\FPAR[1] + (1-q)\FPAR[2])$-fair and obtains expected utility $q\UtilA{\probrank[1]}{\JDIST} + (1-q)\UtilA{\probrank[2]}{\JDIST}$. 
\end{lemma}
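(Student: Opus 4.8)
The plan is to observe that both quantities in the statement --- the fairness level and the principal's utility --- depend on a ranking policy only through its marginal rank distribution \InPosMat, and that randomizing between two policies induces exactly the convex combination of their marginal rank matrices. Writing \probrankMix for the policy that samples a ranking from \probrank[1] with probability $q$ and from \probrank[2] with probability $1-q$, the first step is to establish the linearity identity
\[
  \InPos[\probrankMix]{x}{k} \;=\; q \cdot \InPos[\probrank[1]]{x}{k} + (1-q) \cdot \InPos[\probrank[2]]{x}{k}
\]
for every agent $x \in \agentSet$ and every position $k$. This is immediate from the law of total probability, conditioning on which of the two constituent policies is selected: the event that \probrankMix places $x$ in position $k$ decomposes according to that choice. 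Summing over $k' \le k$ shows that the same convex-combination identity holds for the prefix sums that appear in the fairness condition of Definition~\ref{def:approximate-fairness}.

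For the fairness claim, I would fix an arbitrary $x$ and $k$ and lower-bound the prefix sum of \probrankMix. Applying the linearity identity and then the \FPAR[1]-fairness of \probrank[1] and the \FPAR[2]-fairness of \probrank[2], I would obtain
\[
  \sum_{k'=1}^k \InPos[\probrankMix]{x}{k'} \;\geq\; \bigl(q\FPAR[1] + (1-q)\FPAR[2]\bigr) \cdot \Prob[\DesV \sim \JDIST]{\IsBest[\DesV]{x}{k}}.
\]
The crucial point making this work is that the target $\Prob[\DesV \sim \JDIST]{\IsBest[\DesV]{x}{k}}$ is \emph{identical} in both component fairness inequalities, since it depends only on \JDIST and not on the policy; it therefore factors cleanly out of the convex combination. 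As $x$ and $k$ were arbitrary, this is precisely the assertion that \probrankMix is $(q\FPAR[1] + (1-q)\FPAR[2])$-fair. (The statement says ``at least'' because \FPAR-fairness is a lower-bound condition, so the true best fairness parameter of \probrankMix may in fact exceed this combination.)

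For the utility claim, I would reuse the linear expression for expected utility derived in the proof of Proposition~\ref{prop:sorting-optimal}, namely $\UtilA{\probrank}{\JDIST} = \sum_{x \in \agentSet} \sum_k \InPos[\probrank]{x}{k} \cdot \Expect[\DesV\sim\JDIST]{\Des{x}} \cdot \posweight{k}$, which is linear in the marginals \InPos[\probrank]{x}{k}. Substituting the linearity identity and splitting the double sum into its $q$ and $1-q$ parts yields $\UtilA{\probrankMix}{\JDIST} = q\,\UtilA{\probrank[1]}{\JDIST} + (1-q)\,\UtilA{\probrank[2]}{\JDIST}$ directly. I do not anticipate a genuine obstacle here: the whole lemma rests on the single fact that the marginal rank matrix is a sufficient statistic for both objectives and behaves linearly under mixtures, and the only point requiring any care is verifying that linearity identity at the outset. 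Once it is in place, both conclusions reduce to a one-line substitution.
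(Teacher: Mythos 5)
Your proposal is correct and takes essentially the same approach as the paper: the fairness half is identical, decomposing the prefix sums of the mixture's marginals as a convex combination and factoring out the common term $\Prob[\DesV \sim \JDIST]{\IsBest[\DesV]{x}{k}}$, which depends only on \JDIST. The only (cosmetic) difference is in the utility half, where the paper splits $\probrankMix(\detrank) = q\,\probrank[1](\detrank) + (1-q)\,\probrank[2](\detrank)$ at the level of the full ranking distribution, whereas you substitute into the linear utility expression in the marginals $\InPos[{\probrankMix}]{x}{k}$; both amount to the same linearity-of-expectation computation.
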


\begin{proof}
  Both the utility and fairness proofs are straightforward. The proof of fairness decomposes the probability of agent $i$ being in position $k$ under the mixing policy into the two constituent parts, then pulls terms through the sum.
  The proof of utility uses Equation~\eqref{eqn:util-probrank} and linearity of expectations. We now give details of the proofs.

We write \probrankMix for the policy that randomizes between \probrank[1] and \probrank[2] with probabilities $q$ and $1-q$, respectively.
Using Equation~\eqref{eqn:util-probrank}, we can express the utility of \probrankMix as
\begin{align*}
\UtilA{\probrankMix}{\JDIST}
  & = \Expect[\detrank \sim \probrankMix, \DesV \sim \JDIST]{\UtilA{\detrank}{\DesV}} = \Expect[\DesV \sim \JDIST]{\sum_{\detrank} \probrankMix(\detrank) \cdot \UtilA{\detrank}{\DesV}}
  \\ & = \Expect[\DesV \sim \JDIST]{\sum_{\detrank}(q\cdot\probrank[1](\detrank) + (1-q)\cdot \probrank[2](\detrank)) \cdot \UtilA{\detrank}{\DesV}}
  \\ & = q \cdot \Expect[\DesV \sim \JDIST]{\sum_{\detrank}\probrank[1](\detrank)\cdot\UtilA{\detrank}{\DesV}} +
        (1-q)\cdot \Expect[\DesV \sim \JDIST]{\sum_{\detrank}\probrank[2](\detrank) \cdot \UtilA{\detrank}{\DesV}}
  \\ & = q \UtilA{\probrank[1]}{\JDIST} + (1-q)\UtilA{\probrank[2]}{\JDIST}.
\end{align*}

Similarly, we prove that \probrank is at least $(q\FPAR[1] + (1-q)\FPAR[2])$-fair if \probrank[1] and \probrank[2] are \FPAR[1]- and \FPAR[2]-fair, respectively: 
\begin{align*}
\sum_{k'=1}^k \InPos[\probrank]{x}{k'} & = \sum_{k'=1}^k q \cdot \InPos[{\probrank[1]}]{x}{k'} + (1-q)\cdot \InPos[{\probrank[2]}]{x}{k'}\\
  & = q \cdot \sum_{k'=1}^k \InPos[{\probrank[1]}]{x}{k'} + (1-q) \cdot \sum_{k'=1}^k \InPos[{\probrank[2]}]{x}{k'}\\
  &\geq q \FPAR[1] \cdot \Prob[\DesV \sim \JDIST]{\IsBest[\DesV]{x}{k}} + (1-q) \FPAR[2] \cdot \Prob[\DesV \sim \JDIST]{\IsBest[\DesV]{x}{k}} \\
  &= (q\cdot \FPAR[1] + (1-q)\cdot \FPAR[2]) \cdot \Prob[\DesV \sim \JDIST]{\IsBest[\DesV]{x}{k}},
\end{align*} 
where the inequality used that \probrank[1] is \FPAR[1]-fair and \probrank[2] is \FPAR[2]-fair.
Hence, we have proved that \probrank is $(q\cdot \FPAR[1] + (1-q)\cdot \FPAR[2])$-fair under \JDIST.
\end{proof}

\begin{corollary}
The ranking policy $\probrankFD{\FPAR}$ is \FPAR-fair.
\end{corollary}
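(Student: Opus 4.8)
The plan is to obtain this as an immediate consequence of Lemma~\ref{lemma:mixture}, since $\probrankFD{\FPAR}$ is by definition the policy that randomizes between $\probrankTS[\JDIST]$ and $\probrankOPT[\JDIST]$ with probabilities $\FPAR$ and $1-\FPAR$. I would instantiate the lemma with $\probrank[1] = \probrankTS[\JDIST]$, $\probrank[2] = \probrankOPT[\JDIST]$, and mixing weight $q = \FPAR$.

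The first step is to pin down the fairness parameters of the two constituent policies. By Proposition~\ref{prop:thompson-fairness}, $\probrankTS[\JDIST]$ is $1$-fair, so I take $\FPAR[1] = 1$. For the optimal policy, I would observe that \emph{every} ranking distribution is trivially $0$-fair: setting $\FPAR = 0$ in Definition~\ref{def:approximate-fairness} makes the right-hand side of the fairness inequality zero, while the left-hand side $\sum_{k'=1}^k \InPos[\probrank]{x}{k'}$ is a sum of probabilities and hence nonnegative. Thus $\probrankOPT[\JDIST]$ is $0$-fair, and I take $\FPAR[2] = 0$.

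Substituting these values into the guarantee from Lemma~\ref{lemma:mixture}, the mixture is at least $(q\FPAR[1] + (1-q)\FPAR[2])$-fair, which evaluates to $(\FPAR \cdot 1 + (1-\FPAR)\cdot 0) = \FPAR$. Since $\probrankFD{\FPAR}$ is exactly this mixture, it is $\FPAR$-fair, completing the argument.

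There is no real obstacle here; the only point worth flagging is the observation that the optimal policy qualifies as $0$-fair, which is needed so that Lemma~\ref{lemma:mixture} can be applied with $\FPAR[2] = 0$. Everything else is a direct substitution into the already-proved lemma.
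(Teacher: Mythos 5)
Your proof is correct and matches the paper's intended argument exactly: the corollary is stated as an immediate consequence of Lemma~\ref{lemma:mixture}, instantiated with the $1$-fair policy $\probrankTS$ (Proposition~\ref{prop:thompson-fairness}) and the trivially $0$-fair policy $\probrankOPT$, with mixing weight $q = \FPAR$. Your explicit observation that every ranking distribution is $0$-fair (since the constraint's right-hand side vanishes while the left-hand side is a nonnegative sum of probabilities) is a point the paper leaves implicit --- it notes only that for $\FPAR = 0$ the principal is unconstrained --- so flagging it is a small but welcome addition, not a deviation.
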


By definition, $\probrankFD{\FPAR=0}$ has the highest utility among all 0-fair ranking policies. Furthermore, all 1-fair policies achieve the same utility since the fairness axiom for $\FPAR=1$ completely determines the marginal rank probabilities (Lemma~\ref{lemma:1-fair-optimal}). 

\begin{lemma}\label{lemma:1-fair-optimal}
  All 1-fair ranking policies have the same utility for the principal.
\end{lemma}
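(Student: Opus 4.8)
The plan is to show that 1-fairness, together with the double stochasticity of the marginal rank matrix, pins down the entire matrix \InPosMat[\probrank] uniquely; since the principal's utility depends on \probrank only through this matrix, all 1-fair policies must then share the same utility. The key realization is that the fairness constraints, which are stated as inequalities, are in fact forced to hold with equality because the two sides agree in aggregate.

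First I would introduce shorthand. For a policy \probrank, write $P_{x,k} = \sum_{k'=1}^k \InPos[\probrank]{x}{k'}$ for the probability that \probrank places $x$ in position $k$ or above, and $b_{x,k} = \Prob[\DesV\sim\JDIST]{\IsBest[\DesV]{x}{k}}$ for the probability that $x$ is among the top $k$ by merit. In this notation, 1-fairness (Definition~\ref{def:approximate-fairness} with $\FPAR=1$) reads $P_{x,k}\ge b_{x,k}$ for every agent $x$ and position $k$. Next I would compute the per-position totals of each side. Summing $P_{x,k}$ over all agents and swapping the order of summation gives $\sum_{x} P_{x,k} = \sum_{k'=1}^{k}\sum_{x}\InPos[\probrank]{x}{k'} = k$, using that each column of the doubly stochastic matrix \InPosMat[\probrank] sums to $1$. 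For the merit side I would use the no-ties assumption on \JDIST: for every merit vector \DesV in the support, exactly $k$ agents satisfy \IsBest[\DesV]{x}{k}, so $\sum_{x}\ind[\IsBest[\DesV]{x}{k}] = k$ pointwise, and taking the expectation over $\DesV\sim\JDIST$ yields $\sum_{x} b_{x,k} = k$.

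The crux is then a one-line counting argument: for each fixed $k$ the inequalities $P_{x,k}\ge b_{x,k}$ hold termwise, yet $\sum_x P_{x,k} = k = \sum_x b_{x,k}$, so the nonnegative gaps $P_{x,k}-b_{x,k}$ sum to zero and must therefore all vanish. Hence $P_{x,k}=b_{x,k}$ for all $x$ and $k$, and taking successive differences $\InPos[\probrank]{x}{k} = P_{x,k}-P_{x,k-1}$ (with $P_{x,0}=0$) shows that the full matrix \InPosMat[\probrank] is identical for every 1-fair policy. Finally I would invoke the utility expression derived in the proof of Proposition~\ref{prop:sorting-optimal}, namely $\UtilA{\probrank}{\JDIST} = \sum_{x}\sum_{k}\InPos[\probrank]{x}{k}\,\Expect[\DesV\sim\JDIST]{\Des{x}}\,\posweight{k}$, which is a function of \probrank only through \InPosMat[\probrank]; since that matrix is forced to be the same, so is the utility. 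I do not expect a genuine obstacle here, as the whole argument is elementary; the only point requiring care is recognizing that the matching column sums on the two sides upgrade the fairness inequalities into equalities, which is precisely what makes the marginals unique.
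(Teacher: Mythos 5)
Your proposal is correct and follows essentially the same route as the paper's proof: sum the 1-fairness inequalities over agents for each fixed $k$, observe both sides total $k$ (the left by double stochasticity, the right because exactly $k$ agents are in the top $k$ under the no-ties assumption), conclude that every inequality is tight, and recover the unique marginal rank matrix by successive differences, which determines the utility. Your explicit justification of $\sum_x \Prob[\DesV \sim \JDIST]{\IsBest[\DesV]{x}{k}} = k$ via the pointwise counting argument is a slightly more careful spelling-out of a step the paper states without elaboration, but the argument is identical in substance.
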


\begin{proof}
Let \probrank be a 1-fair ranking policy. By Equation~\eqref{eqn:approximate-fairness}, \probrank must satisfy the following constraints:
\begin{align}
  \sum_{k'=1}^k \InPos[\probrank]{x}{k'}
& \geq \Prob[\DesV \sim \JDIST]{\IsBest[\DesV]{x}{k}} & \mbox{for all $x$ and $k$.} \label{eqn:approximate-fairness-appendix}
\end{align}

Summing over all $x$ (for any fixed $k$), both the left-hand side and right-hand side sum to $k$; for the left-hand side, this is the expected number of agents placed in the top $k$ positions by \probrank, while for the right-hand side, it is the expected number of agents among the top $k$ in merit.
Because the weak inequality \eqref{eqn:approximate-fairness-appendix} holds for all $x$ and $k$, yet the sum over $x$ is equal, \emph{each} inequality must hold with \emph{equality}:

\begin{align*}
  \sum_{k'=1}^k \InPos[\probrank]{x}{k'}
& = \Prob[\DesV \sim \JDIST]{\IsBest[\DesV]{x}{k}} & \mbox{for all $x$ and $k$.} 
\end{align*}
This implies that

\begin{align*}
\InPos[\probrank]{x}{k} & = \Prob[\DesV \sim \JDIST]{\IsBest[\DesV]{x}{k}} - \Prob[\DesV \sim \JDIST]{\IsBest[\DesV]{x}{k-1}},
\end{align*}
which is completely determined by \JDIST.
Substituting these values of \InPos[\probrank]{x}{k} into the principal's utility, we see that it is independent of the specific 1-fair policy used.
\end{proof}

However, while $\probrankFD{\FPAR=0}$ and $\probrankFD{\FPAR=1}$ have the highest utility among 0-fair and 1-fair ranking policies, respectively, $\probrankFD{\FPAR}$ will typically not have maximum utility for the principal among all the \FPAR-fair ranking policies for other values of $\FPAR \in (0,1)$.
We illustrate this with the following example with $n=3$ agents.

\begin{example} \label{example:suboptimal-fairness-day}
Consider $n=3$ agents, namely $a, b$, and $c$.
Under \JDIST, their merits $\Des{a} = 1$, $\Des{b} \sim \Bernoulli{\nicefrac{1}{2}}$, and $\Des{c} \sim \Bernoulli{\nicefrac{1}{2}}$ are drawn independently.%
\footnote{Technically, this distribution violates the assumption of non-identical merit of agents under \JDIST. This is easily remedied by adding --- say --- i.i.d.~$\mathcal{N}(0,\epsilon)$ Gaussian noise to all \Des{i}, with very small $\epsilon$. We omit this detail since it is immaterial and would unnecessarily overload notation.}
  The position weights are $\posweight{1} = 1, \posweight{2} = 1$, and $\posweight{3}=0$. 
  
  Now, since $\posweight{1}=\posweight{2}=1$ and agents $b$ and $c$ are i.i.d., any policy that always places agent $a$ in positions 1 or 2 is optimal.
  In particular, this is true for the policy \probrankOPT which chooses uniformly at random from among $\detrankOPT[1] = \langle a,b,c \rangle,\: \detrankOPT[2] = \langle a,c,b \rangle,\: \detrankOPT[3] = \langle b,a,c \rangle$, and $\detrankOPT[4] = \langle c,a,b \rangle$.

For the specific distribution \JDIST, assuming uniformly random tie breaking, we can calculate the probabilities $\Prob[\DesV \sim \JDIST]{\IsBest[\DesV]{x}{k}}$ in closed form:
\begin{align*}
  \left(\Prob[\DesV \sim \JDIST]{\IsBest[\DesV]{x}{k}}\right)_{x,k}
  & = \nicefrac{1}{24} \cdot \begin{pmatrix}
    14 & 22 & 24 \\
    5 & 13 & 24 \\
    5 & 13 & 24
    \end{pmatrix}.
\end{align*} 

The probability of $a,\:b,\:c$ being \emph{placed} in the top $k$ positions by \probrankOPT can be calculated as follows:
\begin{align*}
  \InPosMat[\probrankOPT]
  & = \nicefrac{1}{24} \cdot \begin{pmatrix}
    12 & 24 & 24 \\
    6 & 12 & 24 \\
    6 & 12 & 24
    \end{pmatrix}.
\end{align*}

In particular, this implies that \probrankOPT is \FPAR-fair for every $\FPAR \leq \nicefrac{12}{14} = \nicefrac{6}{7}$. This bound can be pushed up by slightly increasing the probability of ranking agent $a$ at position 1 (hence increasing fairness to agent $a$ in position 1 at the expense of agents $b$ and $c$ in positions 1--2). Figure~\ref{fig:fairness_day_vs_lp} shows the principal's optimal utility for different fairness parameters \FPAR, derived from the LP~\eqref{eqn:utility-maximizing-LP}. This optimal utility is contrasted with the utility of \probrankFD{\FPAR}, which is the convex combination of the utilities of \probrankOPT and \probrankTS, by Lemma~\ref{lemma:mixture}.

\begin{figure}[htb]
\centering
    \includegraphics[width=0.7\textwidth]{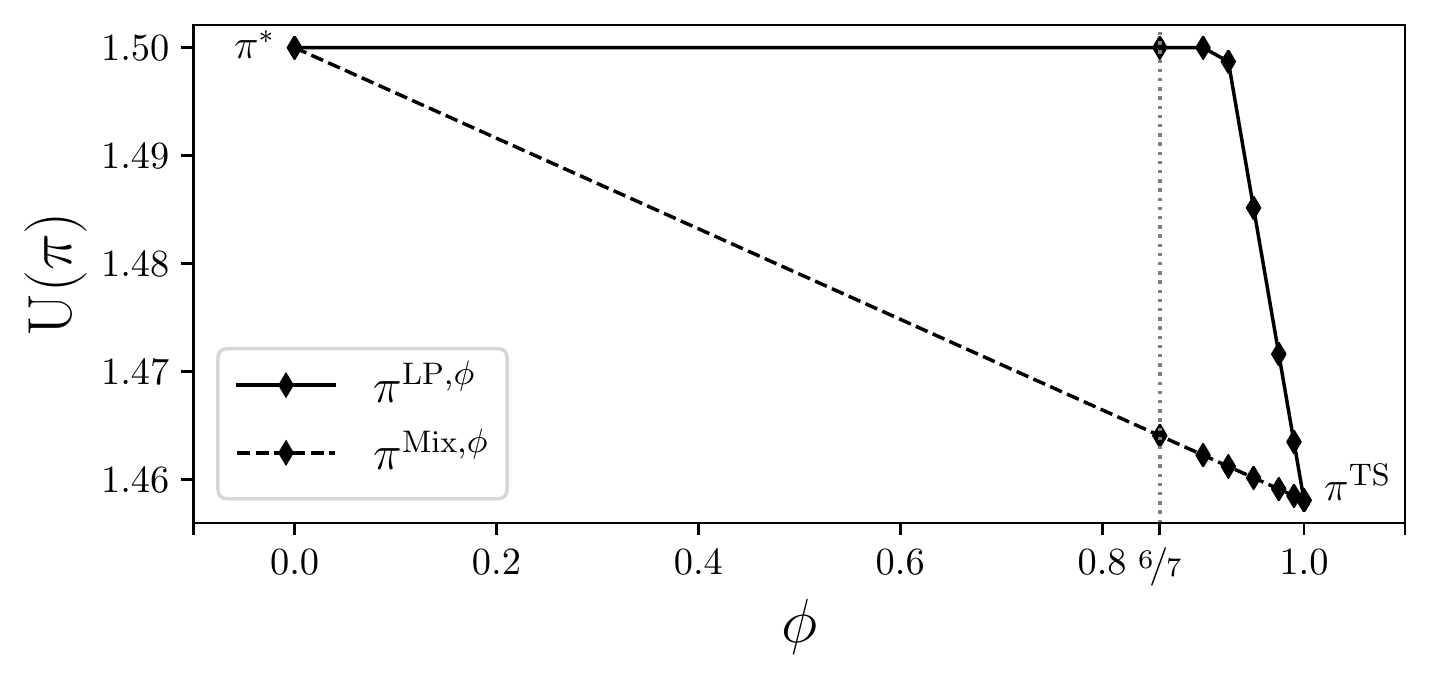}
    \vspace*{-0.15cm}
    \caption{Utility of $\probrankFD{\FPAR}$ and $\probrankLP{\FPAR}$ for Example~\ref{example:suboptimal-fairness-day} as one varies \FPAR.}
    \label{fig:fairness_day_vs_lp}
     \vspace*{-0.15cm}
\end{figure}

\end{example}

\subsection{Optimizing Utility for \FPAR-Fair Rankings}
\label{sec:LP-algorithm}

We now formulate a linear program for computing the policy $\probrankLP{\FPAR}$ that maximizes the principal's utility, subject to being \FPAR-fair.
The variables of the linear program are the marginal rank probabilities \InPos[\probrank]{x}{k} of the distribution \probrank to be determined.
Then, by Equation~\eqref{eqn:util-probrank} and linearity of expectation, the principal's expected utility can be written as 
$
  \UtilA{\probrank}{\JDIST}
  = \sum_{x\in\agentSet} \sum_k \InPos[\probrank]{x}{k}  \cdot \Expect[\DesV\sim\JDIST]{\Des{x}} \cdot \posweight{k}.
$
We use this linear form of the utilities to write the optimization problem as the following LP with variables \InPos{x}{k} (omitting \probrank from the notation):
\begin{LP}[eqn:utility-maximizing-LP]{Maximize}{\!\!\!\!\sum_x \sum_k \InPos{x}{k} \cdot \Expect[\DesV \sim \JDIST]{\Des{x}} \cdot\posweight{k}}
  \!\!\!\!\sum_{k'=1}^k \InPos{x}{k'} \geq \FPAR \cdot \Prob[\DesV \sim \JDIST]{\IsBest[\DesV]{x}{k}} & \!\!\!\mbox{for all } x, k \\
  \!\!\!\!\sum_{k=1}^n \InPos{x}{k} = 1 & \!\!\!\mbox{for all } x\\
  \!\!\!\!\sum_x \InPos{x}{k} = 1 & \!\!\!\mbox{for all } k\\
  \!\!\!\!0 \leq \InPos{x}{k} \leq 1 & \!\!\!\mbox{for all } x, k.
\end{LP}

In the LP, the first set of constraints captures \FPAR-approximate fairness for all agents and positions, while the remaining constraints ensure that the marginal probabilities form a doubly stochastic matrix.

As a second step, the algorithm uses the Birkhoff-von Neumann (BvN) Decomposition of the matrix $\InPosMat = (\InPos{x}{k})_{x,k}$ to explicitly obtain a distribution \probrank over rankings such that \probrank has marginals \InPos{x}{k}.
The Birkhoff-von Neumann Theorem \cite{birkhoff1946tres} states that the set of doubly stochastic matrices is the convex hull of the permutation matrices, which means that we can write $\InPosMat = \sum_{\detrank} q_{\detrank} \InPosMat[\detrank]$,
where \InPosMat[\detrank] is the binary permutation matrix corresponding to the deterministic ranking \detrank, and the $q_{\detrank}$ form a probability distribution.
It was already shown by \citet{birkhoff1946tres} how to find a polynomially sparse decomposition in polynomial time.

Having to solve a linear program obviously makes the computation of \probrankLP{\FPAR} significantly less efficient. The computation is still efficient enough to be feasible for several hundred agents. An interesting direction for future work would be whether the specific LP can be solved more efficiently, either exactly or approximately, by using algorithms other than the standard ones (Ellipsoid or Interior Point Methods).

In order to solve the Linear Program~\eqref{eqn:utility-maximizing-LP}, one needs to know $\Prob[\DesV \sim \JDIST]{\IsBest[\DesV]{x}{k}}$ for all $i$ and $k$.
For some distributions \JDIST (e.g., Example~\ref{example:suboptimal-fairness-day}), these quantities can be calculated in closed form.
For others, they can be estimated using Monte Carlo sampling, as captured by the following proposition.

\begin{proposition} \label{prop:sampling}
Consider an algorithm that draws $m = \frac{(\kappa + 1) \log (2n)}{2 \epsilon^2}$ i.i.d.~samples of the agents' joint merits from \JDIST, and then estimates each probability $\Prob[\DesV \sim \JDIST]{\IsBest[\DesV]{x}{k}}$ by the empirical frequency with which $x$ was in position $k$ or higher. Then, with probability at least $1-n^{-\kappa}$, all $\Prob[\DesV \sim \JDIST]{\IsBest[\DesV]{x}{k}}$ are estimated with additive error at most $\pm \epsilon$.
\end{proposition}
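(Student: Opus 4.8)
The plan is to recognize each target quantity as a value of a cumulative distribution function (CDF) and then invoke a uniform concentration bound. For a fixed agent $x \in \agentSet$, consider the integer-valued random variable $R_x(\DesV) = \SetCard{\Set{x'}{\Des{x'} > \Des{x}}} + 1$ giving the rank of $x$ by merit under a draw $\DesV \sim \JDIST$. By the definition of $\IsBest[\DesV]{x}{k}$, the event ``$x$ is among the top $k$'' is exactly $\{R_x(\DesV) \le k\}$, so $p_{x,k} := \Prob[\DesV \sim \JDIST]{\IsBest[\DesV]{x}{k}} = \Prob[\DesV \sim \JDIST]{R_x(\DesV) \le k}$ is the CDF of $R_x$ evaluated at the integer $k$. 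The algorithm's estimate of $p_{x,k}$ --- the empirical frequency over the $m$ samples with which $x$ lands in position $k$ or higher --- is precisely the empirical CDF $\hat{p}_{x,k}$ of $R_x$ at $k$.

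First I would bound, for each fixed $x$, the worst-case error over all positions $k$ simultaneously. The key step is to apply the Dvoretzky--Kiefer--Wolfowitz (DKW) inequality, with Massart's tight constant, to the $m$ i.i.d.\ samples of $R_x$: this yields $\Prob{\max_{k} |\hat{p}_{x,k} - p_{x,k}| > \epsilon} \le 2 e^{-2 m \epsilon^2}$, where the maximum ranges over all $n$ positions at once. This is the crucial gain, because a single concentration statement covers the entire row of estimates for agent $x$, rather than treating each of its $n$ positions separately.

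Next I would take a union bound over the $n$ agents, giving overall failure probability at most $2 n\, e^{-2 m \epsilon^2}$. Substituting $m = \frac{(\kappa+1)\log(2n)}{2\epsilon^2}$ makes $2 m \epsilon^2 = (\kappa+1)\log(2n)$, so $e^{-2 m \epsilon^2} = (2n)^{-(\kappa+1)}$ and the failure probability is at most $2n \cdot (2n)^{-(\kappa+1)} = (2n)^{-\kappa} \le n^{-\kappa}$. On the complementary event, every $p_{x,k}$ is estimated within additive error $\pm\epsilon$, as claimed.

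The step I expect to be the main obstacle --- or at least the point requiring care --- is the union-bound accounting. A naive argument that applies Hoeffding's inequality separately to each of the $n^2$ pairs $(x,k)$ and then unions over all of them gives failure probability $2 n^2 e^{-2m\epsilon^2} = 2^{-\kappa} n^{1-\kappa}$ for the stated $m$, which exceeds $n^{-\kappa}$ and is therefore \emph{not} good enough. What rescues the stated sample size is the observation that, for a fixed agent, the events $\IsBest[\DesV]{x}{k}$ are nested in $k$, so the estimates $\hat{p}_{x,k}$ form an empirical CDF; exploiting this structure through DKW collapses the union bound from $n^2$ terms to $n$ terms, which is exactly what the $\log(2n)$ factor in $m$ is calibrated for.
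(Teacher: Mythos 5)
Your proof is correct and follows essentially the same route as the paper's: both recognize $\Prob[\DesV \sim \JDIST]{\IsBest[\DesV]{x}{k}}$ as the CDF of agent $x$'s merit rank, apply the DKW inequality (with Massart's constant) to get a uniform bound over all $k$ for each fixed agent, and then union bound over the $n$ agents, with the same arithmetic $2n\,e^{-2m\epsilon^2} = (2n)^{-\kappa} \leq n^{-\kappa}$. Your closing remark correctly identifies why the nested-event/CDF structure is essential to the stated sample size, which is exactly the role DKW plays in the paper's argument as well.
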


\begin{proof}
  Focus on one agent $x$, and write $q_k = \Prob[\DesV \sim \JDIST]{\IsBest[\DesV]{x}{k}}$. Notice that the $q_k$ form the CDF of the rank of $x$.
  Let $Z_{k,j} = 1$ iff $x$ is among the top $k$ agents (by merit) in the \Kth{j} of the $m$ samples. Then, $\Prob{Z_{k,j} = 1} = q_k$, and the estimate $Z_k = \frac{1}{m} \cdot \sum_j Z_{k,j}$ is the average of $m$ independent $\text{Bin}(q_k)$ random variables. By the DKW Inequality for the uniform convergence of the empirical CDF to the true CDF \cite{dvoretzky:kiefer:wolfowitz:CDF,massart:DKW}, we get that with probability at least $1-2 \exp(-2m \epsilon^2) \geq 1-n^{-(\kappa+1)}$, all of the estimates $Z_k$ are within $\pm \epsilon$ of the true values $\Prob[\DesV \sim \JDIST]{\IsBest[\DesV]{x}{k}}$. A union bound over all $n$ agents now completes the proof.
\end{proof}

While the estimates may be off by additive $\epsilon$ terms, it is fairly easy to compensate for such errors at a small loss in fairness and utility, as follows:

\begin{proposition} \label{prop:LP-with-errors}
  For each $x,k$, let $q_{x,k}$ be an empirical estimate of $\Prob[\DesV \sim \JDIST]{\IsBest[\DesV]{x}{k}}$ such that $| q_{x,k} - \Prob[\DesV \sim \JDIST]{\IsBest[\DesV]{x}{k}} | \leq \epsilon$ and $\sum_x q_{x,k} = k$ for all $k$.
  Consider the solution to the LP~\eqref{eqn:utility-maximizing-LP} with fairness parameter $\FPAR$, using\footnote{Notice that the $q'_{x,k}$ in fact satisfy that $\sum_x q'_{x,k} = \frac{k}{k + n \epsilon} \sum_x (q_{x,k} + \epsilon) = \frac{k}{k + n \epsilon} \cdot (k + n \epsilon) = k$, so they can be used as input to the LP.}
  $q'_{x,k} = \frac{k(q_{x,k} + \epsilon)}{k + n \epsilon}$ in place of the (unknown) $\Prob[\DesV \sim \JDIST]{\IsBest[\DesV]{x}{k}}$.
  Then, the resulting sampling distribution is at least $(\frac{\FPAR}{1 + n \epsilon})$-fair, and guarantees the principal a utility within a factor $\frac{1}{1 + n \epsilon}$ of the optimum \FPAR-fair solution.
\end{proposition}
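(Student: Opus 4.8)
The plan is to prove the two guarantees separately, both resting on a single pointwise comparison between the perturbed targets $q'_{x,k}$ and the true top-$k$ probabilities $g_{x,k} := \Prob[\DesV \sim \JDIST]{\IsBest[\DesV]{x}{k}}$.

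For fairness, I would first record that the estimation bound $q_{x,k} \ge g_{x,k} - \epsilon$ gives $q_{x,k} + \epsilon \ge g_{x,k}$, so that
\[
  q'_{x,k} \;=\; \frac{k(q_{x,k}+\epsilon)}{k+n\epsilon} \;\ge\; \frac{k\,g_{x,k}}{k+n\epsilon} \;\ge\; \frac{g_{x,k}}{1+n\epsilon},
\]
where the last inequality is $\frac{k}{k+n\epsilon}\ge\frac{1}{1+n\epsilon}$, valid for every $k\ge 1$. Because the returned distribution is feasible for LP~\eqref{eqn:utility-maximizing-LP} with the $q'_{x,k}$ as right-hand sides (the normalization $\sum_x q'_{x,k}=k$ noted in the footnote is exactly what keeps that LP nonempty), its marginals satisfy $\sum_{k'=1}^{k}\InPos{x}{k'} \ge \FPAR\, q'_{x,k} \ge \frac{\FPAR}{1+n\epsilon}\, g_{x,k}$ for all $x,k$, which is precisely the definition of $\frac{\FPAR}{1+n\epsilon}$-fairness (Definition~\ref{def:approximate-fairness}).

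For utility, I would lower-bound the optimum of the $q'$-LP by exhibiting one feasible point derived from the true optimum. Let $\mathcal{P}^{*}$ be the utility-maximizing $\FPAR$-fair distribution with respect to the true $g_{x,k}$, let $P^{*}_{x,k}$ be its marginals, and let $C^{*}_{x,k}=\sum_{k'=1}^{k}P^{*}_{x,k'}$, so that $C^{*}_{x,k}\ge\FPAR\,g_{x,k}$ and $\sum_x C^{*}_{x,k}=k$. I would then consider the distribution $\tilde{\mathcal{P}}$ whose prefix marginals are $\tilde C_{x,k}=\frac{k(C^{*}_{x,k}+\epsilon)}{k+n\epsilon}$: the same normalization makes $\sum_x \tilde C_{x,k}=k$ and $\tilde C_{x,n}=1$, and monotonicity of $C^{*}_{x,k}$ in $k$ makes $\tilde C$ the cumulative profile of a genuine doubly stochastic matrix. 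Exactly as in the fairness step, $\tilde C_{x,k}\ge\frac{C^{*}_{x,k}}{1+n\epsilon}$; combining this with the summation-by-parts form of \eqref{eqn:util-probrank},
\[
  \UtilA{\mathcal{P}}{\JDIST} \;=\; \sum_{x}\Expect[\DesV\sim\JDIST]{\Des{x}}\sum_{k} C_{x,k}\,(\posweight{k}-\posweight{k+1}), \qquad \posweight{n+1}:=0,
\]
in which every coefficient $\Expect[\DesV\sim\JDIST]{\Des{x}}(\posweight{k}-\posweight{k+1})$ is non-negative (the $\posweight{k}$ are non-increasing, and the expected merits are non-negative in the settings of interest), gives $\UtilA{\tilde{\mathcal{P}}}{\JDIST}\ge\frac{1}{1+n\epsilon}\UtilA{\mathcal{P}^{*}}{\JDIST}$, i.e.\ a $\frac{1}{1+n\epsilon}$ fraction of the optimum.

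I expect the crux to be certifying that $\tilde{\mathcal{P}}$ (or an equally good variant) is actually feasible for the $q'$-LP, so that its utility really does lower-bound the value attained by the algorithm; the utility identity and the fairness inequality above are both routine once this is in place. Cancelling the common factor $\frac{k}{k+n\epsilon}$, feasibility $\tilde C_{x,k}\ge\FPAR\, q'_{x,k}$ is equivalent to $C^{*}_{x,k}+\epsilon\ge\FPAR\,(q_{x,k}+\epsilon)$, which one must derive by chaining $C^{*}_{x,k}\ge\FPAR\,g_{x,k}$ with $q_{x,k}\le g_{x,k}+\epsilon$. Balancing the two additive $\epsilon$ offsets against the multiplicative slack $n\epsilon$ and the factor $\FPAR\le 1$ is the delicate part, and it is precisely here that the particular shape of $q'_{x,k}$ — the additive offset $\epsilon$ together with the denominator $k+n\epsilon$ — is forced; I would expect to have to argue this step (and possibly adjust the feasible point) with care to make it go through uniformly in $\FPAR$.
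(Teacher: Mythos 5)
Your fairness half coincides with the paper's proof essentially verbatim: both use $q'_{x,k} \geq \frac{k}{k+n\epsilon}\cdot\Prob[\DesV \sim \JDIST]{\IsBest[\DesV]{x}{k}}$ together with $\frac{k}{k+n\epsilon} \geq \frac{1}{1+n\epsilon}$ for $k \geq 1$, and then invoke the LP's fairness constraints. Your summation-by-parts rewriting of the utility and the pointwise bound $\tilde C_{x,k} \geq \frac{C^*_{x,k}}{1+n\epsilon}$ are also sound (the paper performs the same rewriting in Equation~\eqref{eqn:rewritten-utility} and likewise assumes $\Expect[\DesV\sim\JDIST]{\Des{x}} \geq 0$), and your observation that $\tilde C$ is the cumulative profile of a genuine doubly stochastic matrix is correct.

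The gap is exactly the step you flagged, and it is not merely delicate --- it fails. Feasibility of your candidate point reduces, as you note, to $C^*_{x,k}+\epsilon \geq \FPAR\,(q_{x,k}+\epsilon)$, but the only available facts are $C^*_{x,k} \geq \FPAR\cdot\Prob[\DesV \sim \JDIST]{\IsBest[\DesV]{x}{k}}$ and $q_{x,k} \leq \Prob[\DesV \sim \JDIST]{\IsBest[\DesV]{x}{k}}+\epsilon$, which together yield only $C^*_{x,k}+\epsilon \geq \FPAR\,(q_{x,k}+\epsilon) + (1-2\FPAR)\epsilon$. Concretely, for an agent with $\Prob[\DesV \sim \JDIST]{\IsBest[\DesV]{x}{k}}=0$, the true optimum may set $C^*_{x,k}=0$ while $q_{x,k}=\epsilon$ is an admissible estimate, and then $\tilde C_{x,k}=\frac{k\epsilon}{k+n\epsilon} < \FPAR\cdot\frac{2k\epsilon}{k+n\epsilon}=\FPAR\, q'_{x,k}$ for every $\FPAR > \nicefrac{1}{2}$; so your construction, unrepaired, proves the proposition only for $\FPAR \leq \nicefrac{1}{2}$. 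The paper avoids this direction entirely: instead of pushing the true optimum into the perturbed feasible region, it truncates the true optimum's cumulative profile by the caps that the \emph{other} agents' perturbed constraints impose on any feasible solution via double stochasticity, setting $z'_{x,k}=\min\left(z^*_{x,k},\, k-\FPAR\sum_{x'\neq x}q'_{x',k}\right)$; it then bounds the per-term loss $z^*_{x,k}-z'_{x,k}\leq\frac{n\epsilon}{k+n\epsilon}\,z^*_{x,k}$ --- this is precisely where the particular shape of $q'_{x,k}$ is consumed --- and compares the perturbed LP's optimum against the profile $z'$, explicitly noting that $z'$ need not be realizable as marginals, so no feasibility certificate of the kind you are seeking is required. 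If you repaired your point by raising it to $\max(\tilde C_{x,k}, \FPAR\, q'_{x,k})$, you would break the column sums, and restoring them while controlling the utility cost is essentially the paper's truncation argument in disguise; so the step you left open is the real content of the utility half, not a routine verification.
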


\begin{proof}
  First, notice that by the assumption that the $q_{x,k}$ were good approximations for $\Prob[\DesV \sim \JDIST]{\IsBest[\DesV]{x}{k}}$, we can bound that $q'_{x,k} \geq \frac{k}{k + n \epsilon} \cdot \Prob[\DesV \sim \JDIST]{\IsBest[\DesV]{x}{k}}$. 

  Because the LP's solution $(\InPos{x}{k})_{x,k}$ is \FPAR-fair with respect to the $q'_{x,k}$, we get that
  \[
    \sum_{k'=1}^k \InPos{x}{k'}
    \; \geq \; \FPAR \cdot q'_{x,k}
    \; \geq \; \frac{k\FPAR}{k + n \epsilon} \cdot \Prob[\DesV \sim \JDIST]{\IsBest[\DesV]{x}{k}}
    \; \geq \; \frac{\FPAR}{1 + n \epsilon} \cdot \Prob[\DesV \sim \JDIST]{\IsBest[\DesV]{x}{k}}
  \]
   for all $x, k$; thus, the solution is $(\frac{\FPAR}{1 + n \epsilon})$-fair.

  Next, we analyze the principal's utility. Let $(p^*_{x,k})_{x,k}$ be a \FPAR-fair solution maximizing the principal's utility, and write $z^*_{x,k} = \sum_{k'=1}^{k} p^*_{x,k'}$ for the probability that agent $x$ is ranked among the top $k$ positions in the optimum solution. Now define $z'_{x,k} = \min(z^*_{x,k}, k - \FPAR \cdot \sum_{x' \neq x} q'_{x',k})$.

  We will prove the following two facts: (1) The principal's utility under the probabilities $z'_{x,k}$ is not much smaller than under the original $z^*_{x,k}$, and (2) Every feasible solution $(p_{x,k})_{x,k}$ to the LP with fairness parameter \FPAR and $q'_{x,k}$ satisfies $\sum_{k' \leq k} p_{x,k'} \geq z'_{x,k}$ for all $x, k$.

\begin{enumerate}
\item To show the first claim, we first use a standard way to rewrite the principal's objective in terms of the $z^*_{x,k}$ (or $z'_{x,k}$), using the definition $z^*_{x,0} := z'_{x,0} := 0$:
  \begin{align}
    \sum_x \sum_{k=1}^n p^*_{x,k} \cdot \Expect[\DesV \sim \JDIST]{\Des{x}} \cdot\posweight{k}
& = \sum_x \Expect[\DesV \sim \JDIST]{\Des{x}} \cdot \sum_{k=1}^n (z^*_{x,k} - z^*_{x,k-1}) \cdot \posweight{k} \nonumber
\\ & = \sum_x \Expect[\DesV \sim \JDIST]{\Des{x}} \cdot \left( \sum_{k=1}^n z^*_{x,k} \cdot  \posweight{k} - \sum_{k=0}^{n-1} z^*_{x,k} \cdot \posweight{k+1} \right) \nonumber
\\ & = \sum_x \Expect[\DesV \sim \JDIST]{\Des{x}} \cdot \left( \posweight{n} + \sum_{k=1}^{n-1} z^*_{x,k} \cdot (\posweight{k} - \posweight{k+1}) \right). \label{eqn:rewritten-utility}
\end{align}

  Because $z'_{x,k} \leq z^*_{x,k+1}$ for all $x, k$, writing $p'_{x,k} := z'_{x,k} - z'_{x,k-1}$, we can also express the principal's utility under $(z'_{x,k})_{x,k}$ in the same way, simply replacing the terms $z^*_{x,k}$ with $z'_{x,k}$ in \eqref{eqn:rewritten-utility}. Note that the $p'_{x,k}$ do not form a valid solution to the LP, because the ``probabilities'' do not necessarily sum up to 1 each across agents or across positions. However, we are only using this ``solution'' to help with our bounds, and feasibility is not required.

  We can write the principal's loss in utility going from $z^*_{x,k}$ to $z'_{x,k}$ as follows:
  \begin{align}
& \phantom{=} \sum_x \Expect[\DesV \sim \JDIST]{\Des{x}} \cdot \left( \posweight{n}
    + \sum_{k=1}^{n-1} z^*_{x,k} \cdot (\posweight{k} - \posweight{k+1}) \right)
- \sum_x \Expect[\DesV \sim \JDIST]{\Des{x}} \cdot \left( \posweight{n}
    + \sum_{k=1}^{n-1} z'_{x,k} \cdot (\posweight{k} - \posweight{k+1}) \right) \nonumber
\\ & = \sum_x \Expect[\DesV \sim \JDIST]{\Des{x}} \cdot \sum_{k=1}^{n-1} (z^*_{x,k} - z'_{x,k}) \cdot (\posweight{k} - \posweight{k+1}) \nonumber
\\ & = \sum_{k=1}^{n-1} (\posweight{k} - \posweight{k+1}) \cdot \sum_x \Expect[\DesV \sim \JDIST]{\Des{x}} \cdot (z^*_{x,k} - z'_{x,k}). \label{eqn:utility-difference}
\end{align}

Notice that $\posweight{k} - \posweight{k+1} \geq 0$ for all $k$, and $\Expect[\DesV \sim \JDIST]{\Des{x}} \geq 0$ for all $x$. To upper-bound the loss in utility, we therefore can apply bounds for each of the terms $z^*_{x,k} - z'_{x,k}$.
Focus on one particular pair $x,k$. Notice that the LP constraints (specifically, the third constraint and the first constraint) imply that

\[
  z^*_{x,k}
  \; = \; k - \sum_{x' \neq x} z^*_{x',k}
  \; \leq \; k - \FPAR \cdot \sum_{x' \neq x} \Prob[\DesV \sim \JDIST]{\IsBest[\DesV]{x'}{k}}
  \; = \; k - \FPAR \cdot (k - \Prob[\DesV \sim \JDIST]{\IsBest[\DesV]{x}{k}}).
\]

  If $z'_{x,k} < z^*_{x,k}$, then
\[
   z'_{x,k}
   \; = \; k - \FPAR \cdot \sum_{x' \neq x} q'_{x',k}
   \; = \; k - \FPAR \cdot \sum_{x' \neq x} \frac{k(q_{x,k} + \epsilon)}{k + n \epsilon}
   \; = \; k - \frac{k\FPAR}{k + n \epsilon} \cdot (k-q_{x,k} + (n-1) \epsilon).
\]

  Therefore, the difference is at most
  \begin{align*}
    z^*_{x,k} - z'_{x,k}
    & \leq \frac{k\FPAR}{k + n \epsilon} \cdot (k-q_{x,k} + (n-1) \epsilon)
- \FPAR \cdot (k - \Prob[\DesV \sim \JDIST]{\IsBest[\DesV]{x}{k}})
    \\  & = \frac{\FPAR}{k + n \epsilon} \cdot \left(
          (k^2-kq_{x,k} + k(n-1) \epsilon)
- (k^2 + kn\epsilon - (k + n\epsilon) \cdot \Prob[\DesV \sim \JDIST]{\IsBest[\DesV]{x}{k}}) \right)
    \\  & \stackrel{(*)}{\leq} \frac{\FPAR}{k + n \epsilon} \cdot \left(
          (-k (\Prob[\DesV \sim \JDIST]{\IsBest[\DesV]{x}{k}} - \epsilon) - k \epsilon)
+ (k + n\epsilon) \cdot \Prob[\DesV \sim \JDIST]{\IsBest[\DesV]{x}{k}}) \right)
    \\  & = \frac{\FPAR n \epsilon}{k + n \epsilon} \cdot
             \Prob[\DesV \sim \JDIST]{\IsBest[\DesV]{x}{k}}
    \\  & \stackrel{(**)}{\leq} \frac{n \epsilon}{k + n \epsilon} \cdot z^*_{x,k}.
  \end{align*}
  Here, the line labeled (*) used that the $q_{x,k}$ approximate the true probabilities $\Prob[\DesV \sim \JDIST]{\IsBest[\DesV]{x}{k}}$ to within additive error at most $\epsilon$, and the line labeled (**) used that the $z^*_{x,k}$ formed a \FPAR-fair solution.\footnote{For $\FPAR=0$, the calculations do not apply, but in that case, the algorithm can completely ignore the estimated probabilities, and will obtain the optimum solution.}

  We now substitute this bound (for each $k,x$) into \eqref{eqn:utility-difference}, obtaining that the principal's loss in utility is at most

  \begin{align*}
  \sum_{k=1}^{n-1} (\posweight{k} - \posweight{k+1}) \cdot \sum_x \Expect[\DesV \sim \JDIST]{\Des{x}} \cdot \frac{n \epsilon}{k + n \epsilon} \cdot z^*_{x,k}
& \leq \frac{n \epsilon}{1 + n \epsilon} \sum_{k=1}^{n-1} (\posweight{k} - \posweight{k+1}) \cdot \sum_x \Expect[\DesV \sim \JDIST]{\Des{x}} \cdot z^*_{x,k},
  \end{align*}
  which is exactly $\frac{n \epsilon}{1 + n\epsilon}$ times the principal's utility under the solution $z^*_{x,k}$, i.e., the optimal utility.
  Thus, the utility obtained from using the approximate values is within at least a factor $1 - \frac{n \epsilon}{1 + n\epsilon} = \frac{1}{1 + n\epsilon}$ of optimal.

\item Next, we show that every feasible solution $(\InPos{x}{k})_{x,k}$ to the LP with fairness parameter \FPAR and $q'_{x,k}$ satisfies $\sum_{k' \leq k} \InPos{x}{k'} \geq z'_{x,k}$ for all $x, k$. In fact, we show that $\sum_{k' \leq k} \InPos{x}{k'} \geq k - \FPAR \cdot \sum_{x' \neq x} q'_{x',k}$, which in turn is at least $z'_{x,k}$ by definition of $z'_{x,k}$.

  To see this, note that for any feasible solution and for all $x,k$, the fairness constraint implies that $\sum_{k'=1}^k \InPos{x}{k'} \geq \FPAR \cdot q'_{x,k}$ and furthermore, $\sum_x \InPos{x}{k'} = 1$ for all $k'$. Therefore, for any fixed $x, k$,
\[
  k
  \; = \; \sum_{x'} \sum_{k'=1}^k \InPos{x'}{k'}
  \; = \; \sum_{k'=1}^k \InPos{x}{k'} + \sum_{x' \neq x} \sum_{k'=1}^k \InPos{x'}{k'}
  \; \geq \; \sum_{k'=1}^k \InPos{x}{k'} + \sum_{x' \neq x} \FPAR \cdot q'_{x',k}.
\]

  Rearranging this inequality gives us the claimed bound.
\end{enumerate}

Now consider the optimal solution $\InPos{x}{k}$ (maximizing the principal's utility) with fairness parameter \FPAR and estimated probabilities $q'_{x,k}$.
For each $x, k$, define $z_{x,k} = \sum_{k'=1}^k \InPos{x}{k'}$.
Then, $z_{x,k} \geq z'_{x,k}$ for all $x, k$, and the utility under $(\InPos{x}{k})_{x,k}$ is given by \eqref{eqn:rewritten-utility} (with $z_{x,k}$ in place of $z^*_{x,k}$). In particular, it is at least as large as under $(z'_{x,k})_{x,k}$, and thus within a factor of $\frac{1}{1+n\epsilon}$ of the optimum.
\end{proof}

By Proposition~\ref{prop:LP-with-errors}, if the principal wants to approximate fairness and utility to within a factor $1-\epsilon$, it suffices to approximate the $\Prob[\DesV \sim \JDIST]{\IsBest[\DesV]{x}{k}}$ to within an additive error of at most $\frac{\epsilon}{n(1-\epsilon)}$. In turn, by Proposition~\ref{prop:sampling}, it is sufficient to draw $O(\frac{\kappa n^2 \log n}{2 \epsilon^2})$ samples from \JDIST to achieve this approximation with probability at least $1-n^{-\kappa}$; in particular, the number is polynomial in $n$ and $1/\epsilon$.

\section{Experimental Evaluation: MovieLens Dataset} \label{sec:experiments}
To evaluate our approach in a recommendation setting with a realistic preference distribution, we designed the following experimental setup based on the MovieLens 100K (ML-100K) dataset.
The dataset contains 100,000 ratings, by 600 users, on 9,000 movies belonging to 18 genres \cite{harper2015movielens}.
In our setup, for each user, the principal is a recommender system that has to generate a ranking of movies $\MovieSet_g$ for one of the genres $g$ (e.g., Horror, Romance, Comedy) according to a notion of \emph{merit} of the movies we define as follows.\footnote{The code to reproduce the experimental evaluation is available at \url{https://github.com/ashudeep/ranking-fairness-uncertainty}}

\subsection{Modeling the Merit Distribution}
We define the (unknown) merit \Des{m} of a movie $m$ as the average rating of the movie across the user population\footnote{For a personalized ranking application, an alternative would be to choose each user's (mostly unknown) rating as the merit criterion instead of the average rating across the user population.} --- this merit is unknown because most users have not seen/rated most movies. 
To be able to estimate this merit based on ratings in the ML-100K dataset, and to concretely define its underlying distribution and the corresponding fairness criteria, we define a generative model of user ratings. The model assumes that each rating of a movie $m \in \MovieSet_g$ is drawn from a multinomial distribution over $\{1, 2, 3, 4, 5\}$ with (unknown) parameters $\multparamsVec[m] = (\multparams[m]{1}, \ldots, \multparams[m]{5})$.

\begin{enumerate}[leftmargin=0cm,itemindent=.5cm,labelwidth=\itemindent,labelsep=0cm,align=left]
\item[\textbf{Prior}: ] These parameters themselves follow a Dirichlet prior
$\multparamsVec[m] \sim \Dir{\dirichletparams}$
with known parameters $\dirichletparams = (\dpar{1}, \dpar{2}, \dpar{3}, \dpar{4}, \dpar{5})$.
We assume that the parameters of the Dirichlet prior are of the form $\dpar{r} = \scaleparam \cdot p_r$ where $\scaleparam$ is a scaling factor and $p_r = \ProbC{\Rating=r}{\Dataset}$ denotes the marginal probability of observing the rating $r$ in the full MovieLens dataset. 

The scaling factor $\scaleparam$ determines the weight of the prior compared to the observed data since it acts as a pseudo-count in $\dirichletparams'$ below. For the sake of simplicity, we use $\scaleparam=\scaleparamValue$ in the following for all movies and genres.
\item[\textbf{Posterior}: ]
Since the Dirichlet distribution is the conjugate prior of the multinomial distribution, the posterior distribution based on the ratings observed in the dataset \Dataset is also a Dirichlet distribution, but with parameters $\dirichletparams' = \dirichletparams+\countratings{m} = (\dpar{1}+\countone{m}{1},\: \ldots,\: \dpar{5}+\countone{m}{5})$ where \countone{m}{r} is the number of ratings of $r$ for the movie $m$ in the dataset \Dataset. 

\end{enumerate}

\subsection{Ranking Policies}

\begin{enumerate}[leftmargin=0.0cm,itemindent=0.25cm,labelwidth=\itemindent,labelsep=0cm,align=left]
\item[\textbf{Utility-Maximizing Ranking} (\probrankOPT): ] We use the DCG function \cite{burges2005learning} with position weights $\posweight{k} = \nicefrac{1}{\log_2(1+k)}$ as our utility measure. Since the weights are indeed strictly decreasing, as described in Section~\ref{sec:policies}, the optimal ranking policy \probrankOPT sorts the movies (for the particular query) by decreasing expected merit, which is the expected average rating
\ExpRelevance{m} under the posterior Dirichlet distribution, and can be computed in closed form as follows:
\begin{align}
  \ExpRelevance{m} & \triangleq \Expect[\multparamsVec{} \sim \ProbC{\multparamsVec[m]}{\Dataset}]{\Des{m}(\multparamsVec{})}
                \; = \; \sum_{r=1}^5 r\cdot \frac{\dpar{r} + \countone{m}{r}}{\sum_{r'} \dpar{r'}+ \countone{m}{r'}}.
\label{eq:average-rating-full}
\end{align}

\item[\textbf{Fully Fair Ranking Policy} (\probrankTS): ]
A fair ranking, in this case, ensures that, for all positions $k$, a movie is placed in the top $k$ positions according to the posterior merit distribution. In this setup, a fully fair ranking policy \probrankTS is obtained by sampling the multinomial parameters $\multparamsVec[m]$ for each movie $m \in \MovieSet_g$ and computing $\Des{m}(\multparamsVec[m])$ to rank them:
\[\probrankTS(\MovieSet_g) \sim \argsort_m \Des{m}(\multparamsVec[m]) \text{ s.t. } \multparamsVec[m]\sim \Prob{\multparamsVec[m] | \Dataset}.\]

\item[\textbf{\FD Ranking Policy} (\probrankFD{\FPAR}): ] The policies \probrankFD{\FPAR} randomize between the fully fair and utility-maximizing ranking policies with probabilities \FPAR and $1-\FPAR$, respectively. 

\item[\textbf{LP Ranking Policy (\probrankLP{\FPAR})}: ]The \FPAR-fair policies $\probrankLP{\FPAR}$ require the principal to have access to the probabilities $\Prob[\DesV \sim \JDIST]{\IsBest[\DesV]{m}{k}}$ which we estimate using \numMCSamples Monte Carlo samples, so that any estimation error becomes negligible. 
\end{enumerate}
\begin{figure}
    \centering
    \includegraphics[width=0.75\textwidth]{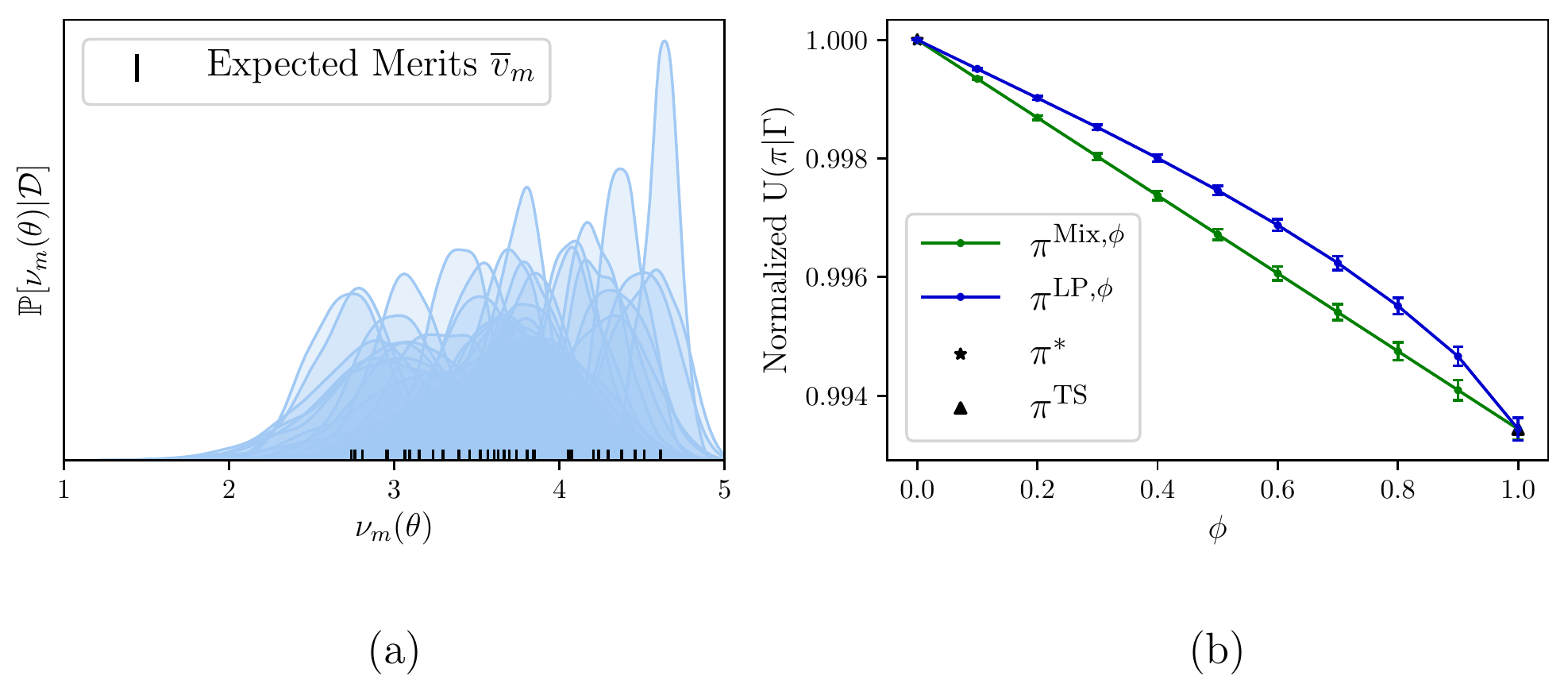}
    \vspace{-0.2cm}
    \caption{(a) Posterior distribution of ratings (merit) for a subset of \selectedgenre movies, (b) Tradeoff between Utility and Fairness, as captured by $\FPAR$.}
    \vspace{-0.25cm}
    \label{fig:movielens}
\end{figure}

\subsection{Observations and Results}

In the experiments presented, we used the ranking policies \probrankOPT, \probrankTS, \probrankFD{\FPAR} and \probrankLP{\FPAR} to create separate rankings for each of the 18 genres. For each genre, the task is to rank a random subset of \numMoviesPerGenre movies from that genre. To get a posterior with an interesting degree of uncertainty, we take a \percentSampleDataset i.i.d.~samples from $\Dataset$ to infer the posterior for each movie. We observe that the results are qualitatively consistent across genres, and we thus focus on detailed results for the genre \selectedgenre as a representative example. Its posterior merit distribution over a subset is visualized in Figure~\ref{fig:movielens}(a). Note that substantial overlap exists between the marginal merit distributions of the movies, indicating that as opposed to \probrankOPT (which sorts based on the expected merits), the policy \probrankTS will randomize over many different rankings. 

\paragraph{Observation 1:} We evaluate the cost of fairness to the principal in terms of loss in utility, as well as the ability of \probrankLP{\FPAR} to minimize this cost for \FPAR-fair rankings. Figure~\ref{fig:movielens}(b) shows this cost in terms of expected Normalized DCG (i.e., $\text{NDCG}=\nicefrac{\text{DCG}}{\max(\text{DCG})}$ as in \cite{jarvelin2002cumulated}). These results are averaged over \numruns runs with different subsets of movies and different training samples. The leftmost end corresponds to the NDCG of \probrankOPT, while the rightmost point corresponds to the NDCG of the 1-fair policy \probrankTS. 

The drop in NDCG is below one percentage point, which is consistent with the results for the other genres. We also conducted experiments with other values of $s$, data set sizes, and choices of $\posweight{k}$; even under the most extreme conditions, the drop was at most 2 percent. While this rather small drop may be surprising at first, we point out that uncertainty in the estimates affects the utility of both \probrankOPT and \probrankTS. By industry standards, a 2\% drop in NDCG is considered quite substantial; however, it is not catastrophic and hence bodes well for possible adoption.

\paragraph{Observation 2:}
Figure~\ref{fig:movielens}(b) also compares the trade-off in NDCG in response to the fairness approximation parameter \FPAR\ for both \probrankFD{\FPAR} and \probrankLP{\FPAR}. We observe that the utility-optimal policy \probrankLP{\FPAR} provides gains over \probrankFD{\FPAR}, especially for large values of \FPAR. Thus, using \probrankLP{\FPAR} can further reduce the cost of fairness discussed above. 

\begin{figure*}[!htp] 
    \centering
    \includegraphics[width=\textwidth]{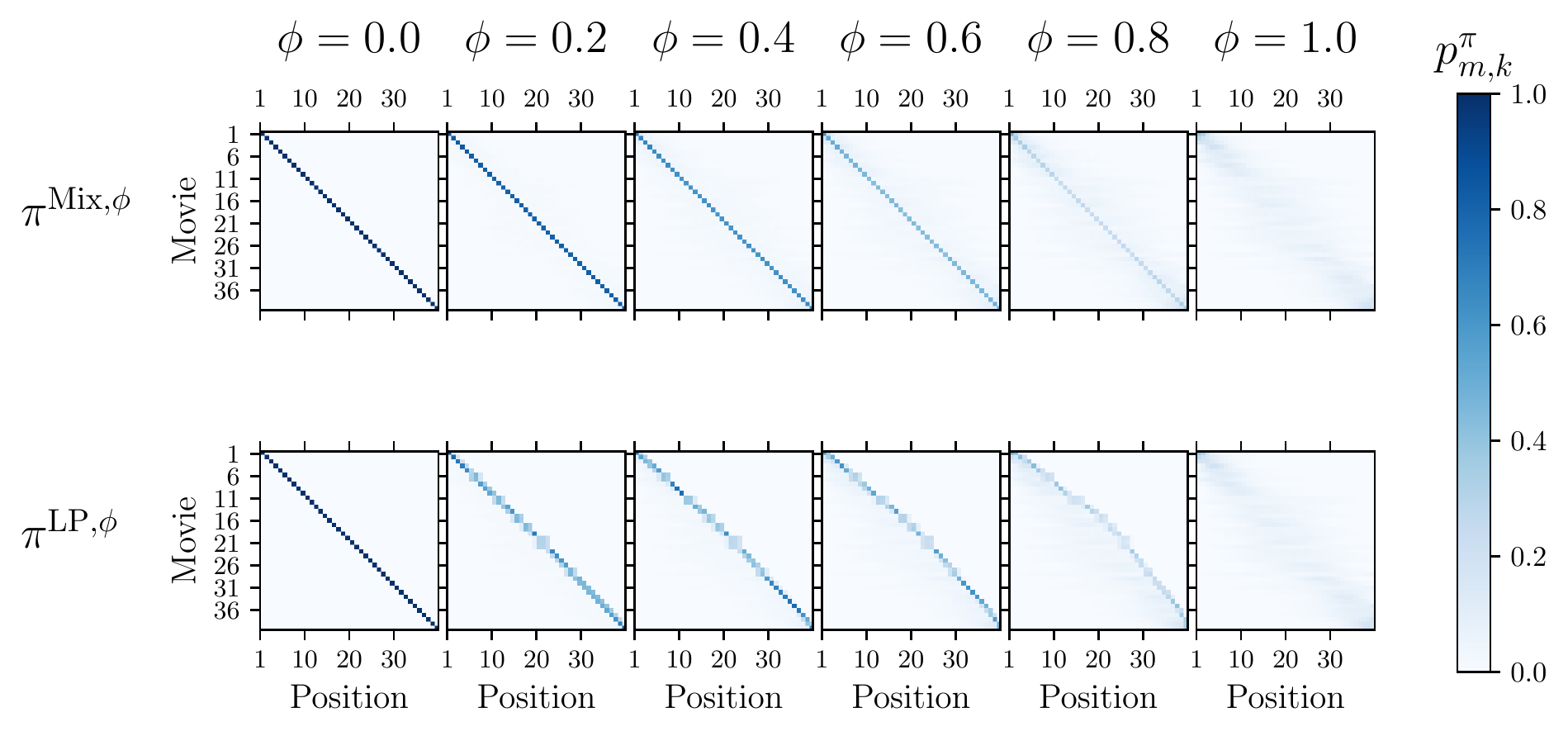}
    \caption{Comparison of marginal rank distribution matrices for $\probrankFD{\FPAR}$ and $\probrankLP{\FPAR}$ on \selectedgenre movies.}
    \label{fig:movielens_ranking_distribution}
\end{figure*}
\paragraph{Observation 3:}
To provide intuition about the difference between \probrankFD{\FPAR} and \probrankLP{\FPAR}, Figure~\ref{fig:movielens_ranking_distribution} visualizes the marginal rank distributions $\InPos{m}{k}$, i.e., the probability that movie $m$ is ranked at position $k$. The key distinction is that, for intermediate values of \FPAR, \probrankLP{\FPAR} exploits a non-linear structure in the ranking distribution (to achieve a better trade-off) while \probrankFD{\FPAR} merely interpolates linearly between the solutions for $\FPAR=0$ and $\FPAR=1$.

Based on these observations, in general, the utility loss due to fairness is small, and can be further reduced by optimizing the ranking distribution with the LP-based approach.
These results are based on the definition of merit as the average rating of movies over the entire user population. A more realistic setting would personalize rankings for each user, with merit defined as the expected relevance of a movie to the user. In our experiments, the results under such a setup were quite similar, and are hence omitted for brevity and clearer illustration.

\section{Real-World Experiment: Paper Recommendation} \label{sec:real-world}
To study the effect of deploying a fair ranking policy in a real ranking system, we built and fielded a paper recommendation system at the 2020 ACM SIGKDD Conference on Knowledge Discovery and Data Mining. 
The goal of the experiment is to understand the impact of fairness under real user behavior, as opposed to simulated user behavior that is subject to modeling assumptions.
Specifically, we seek to answer two questions: (a) Does a fair ranking policy lead to a more equitable distribution of exposure among the papers? (b) Does fairness substantially reduce the utility of the system to the users?

The users of the paper recommendation system were the participants of the conference, which was held virtually in 2020. Signup and usage of the system was voluntary and subject to informed consent. Each user was recommended a personalized ranking of the papers published at the conference. This ranking was produced either by \detrankOPT or by \probrankTS, and the assignment of users to treatment (\probrankTS) or control (\detrankOPT) was randomized.

\subsection{Modeling the Merit Distribution}
The merit of a paper for a particular user is based in part on a relevance score $\Similarity{u}{i}$ that relates features of the user (e.g., bag-of-words representation of recent publications, co-authorship) to features of each conference paper (e.g., bag-of-words representation of paper, citations). Most prominently, the relevance score $\Similarity{u}{i}$ contains the TFIDF-weighted cosine similarity between the bag-of-words representations.

We model the uncertainty in $\Similarity{u}{i}$ with regard to the true relevance as follows. First, we observe that all papers were accepted to the conference and thus must have been deemed relevant to at least some fraction of the audience by the peer reviewers. This implies that papers with uniformly low $\Similarity{u}{i}$ across all/most participants are not irrelevant; we merely have high uncertainty as to which participants the papers are relevant to. For example, papers introducing new research directions or bringing in novel techniques may have uniformly low scores $\Similarity{u}{i}$ under the bag-of-words model that is less certain about who wants to read these papers compared to papers in established areas.
To formalize uncertainty, we make the assumption that a paper's relevance to a user follows a normal distribution centered at $\Similarity{u}{i}$, and with standard deviation equal to $\sd{i}$ (dependent only on the paper, not the user) such that $\max_u \Similarity{u}{i} + \gamma \cdot \sd{i} = 1 + \epsilon$. (For our experiments, we chose $\epsilon = 0.1$ and $\gamma = 2$.) This choice of $\sd{i}$ ensures that there exists at least one user $u$ such that the (sampled) relevance score $\SimilaritySample{u}{i}$ is greater than 1 with some significant probability; more specifically, we ensure that the probability of having relevance $1+\epsilon$ is at least as large as that of exceeding the mean by two standard deviations. Furthermore, $\epsilon>0$ ensures that all papers have a non-deterministic relevance distribution, even papers with $\max_u \Similarity{u}{i} = 1$.

\subsection{Ranking Policies}

Users in the control group \controlusers\ received rankings in decreasing order of $\Similarity{u}{i}$.
Users in the treatment group \treatmentusers\ received rankings from the fair policy that sampled scores from the uncertainty distribution, $\SimilaritySample{u}{i} \sim \mathcal{N}(\Similarity{u}{i}, \sd{i})$, and ranked the papers by decreasing \SimilaritySample{u}{i}.

\subsection{Results and Observations}

\begin{figure}
\begin{floatrow}
\ffigbox[0.45\textwidth]{
\includegraphics[width=0.45\textwidth]{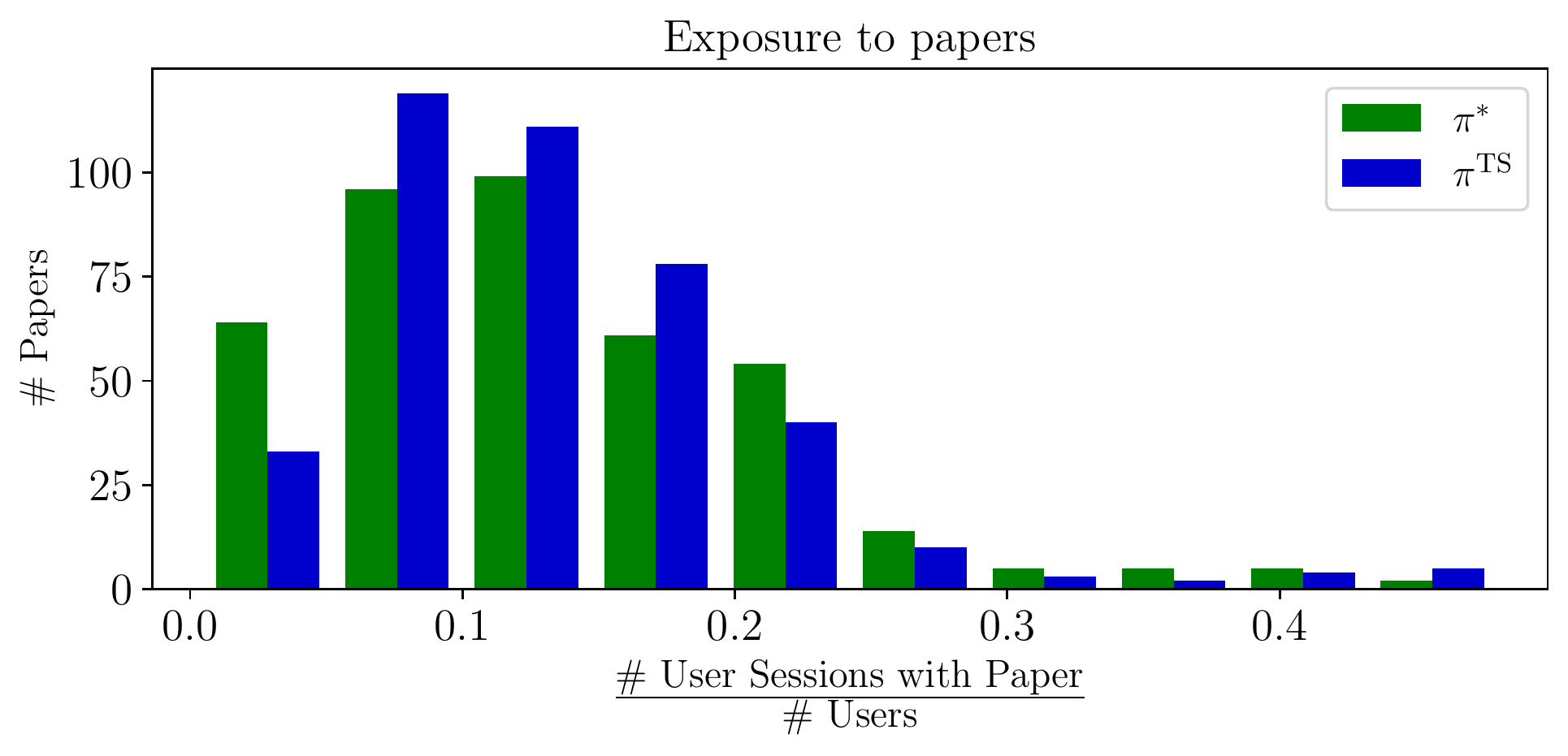}
}{%
  \caption{Distribution of the exposure of papers in treatment and control.\label{fig:exposure-engagement-kdd}}
}
\capbtabbox[0.55\textwidth]{%
\resizebox{0.55\textwidth}{!}
{\begin{tabular}{@{}lcccc@{}}
\toprule
                         & \multicolumn{2}{l}{\begin{tabular}[c]{@{}l@{}}Number of Users\\with activity\end{tabular}} & \multicolumn{2}{l}{\begin{tabular}[c]{@{}l@{}}Average Activity\\ Per User\end{tabular}} \\ \cmidrule(l){2-5} 
                         & $\probrankOPT$                          & $\probrankTS$                          & $\probrankOPT$         & $\probrankTS$        \\ \midrule
Total Number of users    & 213                                     & 248                                    &       -                 & -                     \\
Num.~of pages examined & -                                        & -                                       & 10.8075                & 10.7984              \\
\texttt{Read Abstract}           & 92                                      & 101                                    & 3.7230                 & 2.6774               \\
\texttt{Add to Calendar}          & 51                                      & 50                                     & 1.4366                 & 0.8508               \\
\texttt{Read PDF}                 & 40                                      & 52                                     & 0.5258                 & 0.5323               \\
\texttt{Add Bookmark}             & 16                                      & 13                                     & 0.3192                 & 0.6129               \\ \bottomrule
\end{tabular}%
}}
{%
  \caption{User engagement under the two conditions \probrankOPT and \probrankTS. None of the differences are statistically significant. (For user actions, this is specifically due to the small sample size).}%
  \label{table:real-world}
}
\end{floatrow}
\end{figure}

We first analyze if the fair policy provided more equitable exposure to the papers. In this real-world evaluation, exposure is not equivalent to rank, but depends on whether users actually browsed to a given position in the ranking. Users could browse their ranking in pages of 5 recommendations each; we count a paper as \emph{exposed} if the user scrolled to its page. 
\begin{enumerate}[leftmargin=0cm,itemindent=.5cm,labelwidth=\itemindent,labelsep=0cm,align=left]
    \item[\textbf{Observation 1: }] Figure~\ref{fig:exposure-engagement-kdd} compares the histograms of exposure of the papers in the treatment and control groups. Under the fair policy, the number of papers in the lowest tier of exposure is roughly halved compared to the control condition. This verifies that the fair policy does have an impact on exposure in a real-world setting, and it aligns with our motivation that a fair ranking policy distributes exposure more equally among the ranked agents. This is also supported by comparing the Gini inequality coefficient \cite{gini1936measure} for the two distributions: $G(\probrankOPT) = 0.3302$, while $G(\probrankTS) = 0.2996$ (where a smaller coefficient means less inequality in the distribution).
    \item[\textbf{Observation 2: }] To evaluate the impact of fairness on user engagement, we analyze a range of engagement metrics as summarized in Table~\ref{table:real-world}. While a total of 923 users signed up for the system ahead of the conference (and were randomized into treatment and control groups), 462 never came to the system after all. Of the users that came, 213 users were in \controlusers, and 248 users were in \treatmentusers. Note that this difference is not caused by the treatment assignment, since users had no information about their assignment/ranking before entering the system.

The first engagement metric we computed is the average number of pages that users viewed under both conditions. With roughly 10.8 pages (about 54 papers), engagement under both conditions was almost identical. Users also had other options to engage, but there is no clear difference between the conditions, either. On average, they read more paper abstracts and added more papers to their calendar under the control condition, but read more PDF and added more bookmarks under the treatment condition. However, none of these differences is significant at the 95\% level for either a Mann-Whitney U test or a two-sample t-test. While the sample size is small, these findings align with the findings on the synthetic data, namely that fairness did not appear to place a large cost on the principal (here representing the users).
\end{enumerate}

\section{Conclusions and Future Work} \label{sec:conclusions}
We believe that the focus on uncertainty we proposed in this paper constitutes a principled approach to capturing the intuitive notion of fairness to agents with similar features: rather than focusing on the features themselves, the key insight is that the features' similarity entails significant statistical uncertainty about which agent has more merit. Randomization provides a way to fight fire with fire, and axiomatize fairness in the presence of such uncertainty.

Our work raises a wealth of questions for future work. Perhaps most importantly, as discussed in Section~\ref{sec:discussion}, to operationalize our proposed notion of fairness, it is important to derive principled merit distributions \JDIST based on the observed features.
Our experiments were based on ``reasonable'' notions of merit distributions and concluded that fairness might not have to be very expensive to achieve for a principal.
However, much more experimental work is needed to truly evaluate the impact of fair policies on the utility that is achieved. It would be particularly intriguing to investigate which types of real-world settings lend themselves to implementing fairness at little cost, and which force a steep trade-off between the two objectives.

Our work also raises several interesting theoretical questions. In Section~\ref{sec:discussion}, we show one setting in which forcing the principal to use a fair policy drastically increases the principal's incentives to form a more accurate posterior \JDIST for a minority group. We did not prove a general result in this vein. We ask: will the incentives of a principal to learn a better posterior \JDIST always (weakly) increase if the principal is forced to be fairer? If true, this would provide a fascinating additional benefit of fairness requirements.

Another interesting question concerns the utility loss incurred by using the policy \FD. As shown in Section~\ref{sec:fairness-day}, \FD is in general not optimal. However, in the example from Section~\ref{sec:fairness-day} as well as in our experiments in Section~\ref{sec:experiments}, the loss in utility was quite small. An interesting question would be to bound the worst-case loss in the utility of \FD, compared to the LP-based policy. In particular, this question is of interest due to the simplicity of the \FD policy; it does not require the computationally expensive solution of an LP or an explicit estimate of marginal rank probabilities under \JDIST.

\section*{Acknowledgements}

This research was supported in part by NSF Award IIS-2008139 and IIS-1901168. We thank Aleksandra Korolova, Cris Moore, Stephanie Wykstra and anonymous reviewers for useful discussions and feedback.
Any opinions, findings, and conclusions or recommendations expressed in this material are those of the author(s) and do not necessarily reflect the views of the National Science Foundation.

\bibliography{main} %

\end{document}